\def\1{\bm{1}}
\DeclareMathAlphabet{\mathsfit}{\encodingdefault}{\sfdefault}{m}{sl}
\SetMathAlphabet{\mathsfit}{bold}{\encodingdefault}{\sfdefault}{bx}{n}
\newcommand{\ie}{\textit{i}.\textit{e}.}
\newcommand{\hide}[1]{}
\newcommand{\spmetric}{\Delta_{\text{SP}}}
\newcommand{\eometric}{\Delta_{\text{EO}}}
\newcommand{\method}{BeMap}
\newtheorem{problem}{Problem}
\newtheorem{lemma}{Lemma}
\newtheorem{theorem}{Theorem}
\newtheorem{definition}{Definition}
\newtheorem{proposition}{Proposition}
\newtheorem{assumption}{Assumption}
\title{\method: Balanced Message Passing for Fair Graph Neural Network} 
\author[Lin et al.]{Xiao Lin$^1$,~~~Jian Kang$^2$,~~~Weilin Cong$^3$,~~~Hanghang Tong$^1$ \\
    $^1$University of Illinois Urbana-Champaign, $^2$University of Rochester, $^3$Penn State University \\
    \email{$^1$\{xiaol13, htong\}@illinois.edu, $^2$jian.kang@rochester.edu, $^3$wxc272@psu.edu}
}
\begin{document}
\maketitle

\vspace{-6mm}
\begin{abstract}
Fairness in graph neural networks has been actively studied recently. However, existing works often do not explicitly consider the role of message passing in introducing or amplifying the bias.
In this paper, we first investigate the problem of bias amplification in message passing. We empirically and theoretically demonstrate that message passing could amplify the bias when the 1-hop neighbors from different demographic groups are unbalanced. Guided by such analyses, we propose \method, a fair message passing method, that leverages a balance-aware sampling strategy to balance the number of the 1-hop neighbors of each node among different demographic groups. Extensive experiments on node classification demonstrate the efficacy of \method~in mitigating bias while maintaining classification accuracy. The code is available at \url{https://github.com/xiaolin-cs/BeMap}.
\end{abstract}
\vspace{-6mm}
\section{Introduction}
\vspace{-1mm}

Graph neural network (GNN) learns node representations via message passing, which aggregates information from local neighborhood of each node, to capture topological and attributive characteristics of graph data~\cite{kipf2016semi, hamilton2017inductive, chen2018fastgcn, wu2019simplifying}. Despite the substantial research progress, concrete evidence has shown that GNNs could carry certain biases, which lead to unfair learning results. For example, a graph-based recommender system may discriminate individuals from certain groups by recommending fewer career opportunities~\cite{sweeney2013discrimination, lambrecht2019algorithmic}. As such, the widespread use of GNNs in safe-critical and life-changing applications, such as employment systems~\cite{mehrabi2021survey}, is in imminent need of fairness considerations.

In this paper, we study group fairness, which is one of the most intuitive and fundamental fairness notions. To date, researchers have developed a collection of algorithms to ensure group fairness for graph neural networks~\cite{bose2019compositional, dai2021say, li2021dyadic, dong2022edits, jiang2022fmp}. A typical approach is to impose the fairness consideration as a regularization term from the optimization perspective. Though achieving promising performance in bias mitigation, these methods often do not explicitly take into account the bias introduced or amplified by message passing during model training. To our knowledge, only a few works consider bias mitigation during message passing~\cite{spinelli2021biased, jiang2022fmp}. They either manipulate the graph structure empirically~\cite{spinelli2021biased, chen2022graph} or completely change the message passing procedure to solve a complex minimax problem~\cite{jiang2022fmp, zhu2023fairness}. Several fundamental questions remain nascent in the context of algorithmic fairness on GNN: \textit{(Q1) Does the message passing in GNN introduce or amplify bias? (Q2) If so, how to ensure fairness during message passing without changing the GNN architecture in a principled way?}

To answer these questions, we study the problems of (Q1) bias amplification in message passing and (Q2) fair message passing. For (Q1) bias amplification in message passing, we both empirically and theoretically show that message passing can indeed amplify bias when the numbers of neighboring nodes from different demographic groups are unbalanced. The key idea is to measure the bias as the reciprocal of the expected squared Euclidean distance of each node to the centroid of the demographic group it belongs to.\footnote{The more biased the model, the smaller the distance.} Based on the distance-based bias definition, for each node, if the numbers of neighbors from different demographic groups are unbalanced, the expected squared distance to the corresponding centroid would shrink after message passing, meaning that the dependence of the model on the sensitive attributes increases, and the bias in predictions is thus amplified. For (Q2) fair message passing in GCNs, guided by the theoretical insights from Q1, we propose \method~that leverages a balance-aware sampling strategy to selectively sample a subset of neighbors to achieve a balanced number of neighbors from different demographic groups for bias mitigation.

In summary, our main contributions are as follows:
\vspace{-1mm}
\begin{itemize}[
    align=left,
    leftmargin=2em,
    itemindent=0pt,
    labelsep=0pt,
    labelwidth=1em,
]
    \vspace{-1mm}
    \item \textbf{Problems.} We study the problems of bias amplifications in message passing and fair message passing from the perspective of neighborhood balance of each node. 
    \vspace{-1mm}
    \item \textbf{Analyses.} Our analyses \textit{both empirically and theoretically} reveal that the message passing schema could amplify the bias, if the demographic groups with respect to a sensitive attribute in the local neighborhood of each node are unbalanced.
    \vspace{-1mm}
    \item \textbf{Algorithm.} Guided by our analyses, we propose an easy-to-implement sampling-based method named \method, which leverages a balance-aware sampling strategy to mitigate bias.
    \vspace{-1mm}
    \item \textbf{Evaluations.} We conduct extensive experiments on real-world datasets, which demonstrate that \method~could significantly mitigate bias while maintaining a competitive accuracy compared with the baseline methods.
\end{itemize}

\vspace{-4mm}
\section{Related Work}
\vspace{-2mm}
\noindent\textbf{Graph neural network (GNN)} has demonstrated state-of-the-art performance in various learning tasks, including anomaly detection~\cite{ding2021few}, crime rate prediction~\cite{suresh2019framework}, and recommender systems~\cite{fan2019graph}.
\cite{kipf2016semi} proposes the Graph Convolutional Network (GCN) by utilizing a localized first-order approximation of spectral graph convolutions. 
\cite{hamilton2017inductive} introduces graph neural networks for inductive learning via neighborhood sampling and aggregation. 
\cite{velivckovic2017graph} leverages the self-attention mechanism to learn the attention weights for neighborhood aggregation. 
\cite{chen2018fastgcn} scales up the training on large graphs by importance sampling. 
\cite{feng2020graph} learns node representations by randomly dropping nodes to augment data and enforcing the consistency of predictions among augmented data. 
Similarly, \cite{papp2021dropgnn} randomly drops nodes for several runs and aggregates the predictions for the final prediction by ensemble. Different from~\cite{chen2018fastgcn, feng2020graph, papp2021dropgnn} that drop nodes for scalable training or improving the generalization and/or expressiveness of GNN, our work drops edges (i.e., dropping nodes in the local neighborhood of a node) to mitigate bias in GNN. 
\cite{rong2019dropedge} randomly drops edges to perform data augmentation and prevents over-smoothing. 
Different from~\cite{rong2019dropedge} that randomly drops edges to alleviate over-smoothing, our work selectively removes edges to obtain balanced graph structures to improve the model fairness. 
For comprehensive literature reviews on graph neural networks, please refer to \cite{zhang2019graph, wu2022recent, han2022geometrically, xia2022survey, ding2022data}.

\vspace{-1mm}
\noindent \textbf{Fairness in graph neural networks} has been actively studied. 
A common strategy to learn fair graph neural networks is through optimizing a regularization-based optimization problem~\cite{bose2019compositional, agarwal2021towards, dai2021say, dong2022edits, ma2022learning}. 
For group fairness, 
\cite{bose2019compositional} ensures group fairness by minimizing the mutual information between sensitive attributes and node embeddings via adversarial learning.
\cite{dai2021say} leverages a similar debiasing strategy to learn fair graph neural networks with limited sensitive attributes. 
For individual fairness, 
\cite{kang2020inform} mitigates individual bias using the Laplacian regularization on the learning results.
\cite{dong2021individual} adopts learning-to-rank for ranking-based individual fairness.
In terms of counterfactual fairness, 
\cite{agarwal2021towards} learns counterfactually fair node embeddings through contrastive learning.
\cite{ma2022learning} generates the counterfactual graph with variational graph auto-encoder~\cite{kipf2016variational}.
Different from the aforementioned techniques, our work does not require any regularization in the objective function.
Other than regularization-based formulation, 
\cite{dong2022edits} preprocesses the input graph by minimizing the Wasserstein distance between the embedding distributions of majority and minority groups. 
\cite{chen2022graph} generates fair neighborhoods by neighborhood rewiring and selection.
\cite{jiang2022fmp} proposes a novel message passing schema that solves a minimax problem w.r.t. group fairness. 
\cite{spinelli2021biased} promotes the existence of edges connecting nodes in different demographic groups. 
However, 
\cite{jiang2022fmp} bears little resemblance to the original message passing schema, thus completely changing the training procedures; 
\cite{chen2022graph} lacks a theoretical understanding on the connection between balanced neighborhood and group fairness; 
\cite{dong2022edits} requires a pre-trained model to pre-process the input graph; 
and \cite{spinelli2021biased} cannot guarantee a balanced neighborhood consequently. 
Compared to \cite{jiang2022fmp, dong2022edits, chen2022graph, spinelli2021biased}, our work fills the gap between a balanced neighborhood and group fairness and tunes the message passing in a easy-to-implement node sampling strategy with theoretical guarantee. Further discussion on prior works related to fairness in GNNs is provided in Appendix \ref{appdix:related_work}. 

\vspace{-5mm}
\section{Preliminaries} \label{sec:prelim}
\vspace{-1mm}
In this section, we first briefly introduce the Graph Convolutional Network (GCN) and the commonly used group fairness definitions. Then, we formally define the problem of bias amplification in message passing and fair message passing in GCN. 

\vspace{-1mm}
\noindent\textbf{Notation Convention.} We use bold uppercase/lowercase letters for matrices/vectors (e.g., $\mathbf{A}$, $\mathbf{x}$), 
italic letters for scalars (e.g., $d$), and calligraphic letters for sets (e.g., $\mathcal{N}$). For matrix indexing, the $i$-th row of a matrix is denoted as its corresponding bold lowercase letter with subscript $i$ (e.g., the $i$-th row of matrix $\mathbf{X}$ is $\mathbf{x}_i$). Notations are summarized in Table \ref{tab:notation_appendix}.

\vspace{-1mm}
\noindent\textbf{Graph Convolutional Networks.} In this paper, we study the message passing schema in Graph Convolutional Network (GCN)~\cite{kipf2016semi}, which is one of the most classic graph neural networks. Let $\mathcal{G} = \left\{\mathcal{V}, \mathbf{A}, \mathbf{X}\right\}$ denote a graph with a node set of $n$ nodes $\mathcal{V} = \left\{v_1, \dots, v_n \right\}$, a binary adjacency matrix $\mathbf{A}$, and node feature matrix $\mathbf{X}$. For any node $v_i$, we denote its degree and the feature as $d_i$ and $\mathbf{x}_i$. \footnote{Although we only consider binary adjacency matrix, our theoretical analysis and proposed method can be naturally generalized to graph with weighted edges by replacing 0/1 edge weight to other values.} 
For the $l$-th hidden layer in an $L$-layer GCN, we denote its weight matrix as $\mathbf{W}^{\left(l\right)}$ and the input and output representations of node $v_i$ as $\mathbf{h}_i^{\left(l\right)}$ and $\mathbf{h}_i^{\left(l+1\right)}$, respectively.\footnote{For notation simplicity, we denote $\mathbf{h}_i^{\left(1\right)} = \mathbf{x}_i$ for all $v_i\in\mathcal{V}$.} 
Then the message passing schema in GCN is $\mathbf{\widehat h}^{\left(l\right)}_i = \sum_{v_j \in \mathcal{\widehat N}\left(v_i\right)} \alpha_{ij} \mathbf{h}_j^{\left(l\right)}$, where $\mathcal{\widehat N}\left(v_i\right) = \mathcal{N}\left(v_i\right)\cup \left\{v_i\right\}$ is the self-augmented neighborhood (i.e., the union of node $v_i$ and its 1-hop neighborhood $\mathcal{N}\left(v_i\right)$) and $\alpha_{ij}$ is the aggregation weight with the source node being $v_i$ and the target node being $v_j$ (e.g., $\alpha_{ij} = \frac{1}{\sqrt{d_i + 1} \sqrt{d_j + 1}}$ for symmetric normalization
and $\alpha_{ij} = \frac{1}{d_i + 1}$ for row normalization). 
Based on the message passing schema, the graph convolution for $v_i$ in GCN can be formulated as $\mathbf{h}_i^{(l+1)} = \sigma\left(\mathbf{\widehat h}^{\left(l\right)}_i \mathbf{W}^{\left(l\right)}\right)$, where $\sigma(\cdot)$ is the nonlinear activation (e.g., ReLU).

\vspace{-2mm}
\noindent\textbf{Group Fairness.} Group fairness aims to ensure the parity of model predictions among the demographic groups of data points, where the demographic groups are often determined by a sensitive attribute (e.g., gender and race). Specifically, we adopt two widely used fairness criteria, i.e., statistical parity~\cite{dwork2012fairness} and equal opportunity~\cite{hardt2016equality}, which are defined in Definitions~\ref{defn:sp} and \ref{defn:eo}, respectively.

\begin{definition}\label{defn:sp}
\textbf{(Statistical parity~\cite{dwork2012fairness})} Given any label $y \in \left\{0, 1\right\}$, any sensitive attribute $s\in\left\{0, 1\right\}$ and the prediction $\widehat{y} \in \left\{0, 1\right\}$ inferred by a model, a model satisfies statistical parity if and only if the acceptance rate with respect to the model predictions are equal for different demographic groups. Mathematically, statistical parity can be expressed as
\vspace{-1mm}
\begin{equation}\label{eq:sp}
    \textit{Pr}\left(\widehat{y} = 1 \mid s = 0\right) = \textit{Pr}\left(\widehat{y} = 1 \mid s = 1\right)
\end{equation}
where $\textit{Pr}\left(\cdot\right)$ refers to the probability of an event.
\end{definition}

\begin{definition}\label{defn:eo}
\textbf{(Equal opportunity~\cite{hardt2016equality})} Following the settings of Definition~\ref{defn:sp}, a model satisfies equal opportunity if and only if the true positive rate with respect to the model predictions are equal for different demographic groups. Mathematically, equal opportunity can be expressed as
\vspace{-1mm}
\begin{equation}\label{eq:eo}
    \textit{Pr}\left(\widehat{y} = 1 \mid y = 1, s = 0\right) = \textit{Pr}\left(\widehat{y} = 1 \mid y = 1, s = 1\right)
\end{equation}
where $\textit{Pr}\left(\cdot\right)$ refers to the probability of an event.
\end{definition}

Given Definitions \ref{defn:sp} and \ref{defn:eo}, the bias with respect to statistical parity and equal opportunity are naturally defined as the discrepancies in the acceptance rate and the true positive rate across different demographic groups. Mathematically, the quantitative measures of bias with respect to statistical parity and equal opportunity are defined as Eq.~\eqref{sp_metric} and Eq. \eqref{eo_metric}, respectively. 
\vspace{-1mm}
\begin{equation}\label{sp_metric}
    \spmetric = \vert \textit{Pr}\left(\widehat{y} = 1 \mid s = 1\right) - \textit{Pr}\left(\widehat{y} = 1 \mid s = 0\right) \vert
\end{equation}
\begin{equation}\label{eo_metric}
   \eometric = \vert \textit{Pr}\left(\widehat{y} = 1 \mid y = 1, s = 1\right)-\textit{Pr}\left(\widehat{y} = 1 \mid y = 1,s = 0\right) \vert
\end{equation}
\vspace{-2mm}
From the information-theoretic perspective, minimizing Eq.~\eqref{sp_metric} and Eq. \eqref{eo_metric} is essentially equivalent to eliminating the statistical dependency between the model prediction $\widehat{y}$ and the sensitive attribute $s$. 
Consequently, to ensure group fairness, existing works~\cite{bose2019compositional, dai2021say, dong2022edits} propose to impose the statistical dependency (e.g., mutual information) between $\widehat{y}$ and $s$ as regularization in the optimization problem.
Nevertheless, these works could not explicitly consider the bias caused by the message passing schema. To this end, we seek to understand the role of message passing in the context of algorithmic fairness. 
Based on that, we define the problem of bias amplification in message passing as follows:

\begin{problem}
Bias amplification in message passing.
\end{problem}

\vspace{-2mm}
\noindent \textbf{Given}: (1) An undirected graph $\mathcal{G}=\left\{\mathcal{V}, \mathbf{A}\right\}$; (2) an $L$-layer GCN; (3) the vanilla message passing schema in any $l$-th hidden layer $\mathbf{\widehat h}^{\left(l\right)}_i = \sum_{v_j \in \mathcal{N}\left(v_i\right) \cup \left\{v_i\right\} } \alpha_{ij} \mathbf{h}_j^{\left(l\right)}$; (4) a sensitive attribute $s$; (5) a bias measure $\textit{bias}$ for statistical parity.

\vspace{-1mm}
\noindent \textbf{Find}: A binary decision regarding whether or not the bias will be amplified after message passing.



Based on the answer to Problem 1, our goal is to develop a generic fair message passing such that the bias measure in Problem 1 will decrease after message passing. We define the problem of fair message passing as follows:


\begin{problem}
Fair message passing.
\end{problem}

\vspace{-2mm}
\noindent \textbf{Given}: (1) An undirected graph $\mathcal{G}=\left\{\mathcal{V}, \mathbf{A}, \mathbf{X}\right\}$; (2) an $L$-layer GCN; (3) a sensitive attribute $s$; (4) a bias measure $\spmetric$.

\vspace{-2mm}
\noindent \textbf{Find}: A fair message passing $\mathbf{\widehat h}^{\left(l\right)}_i = \textit{MP}\left(\mathbf{h}_i^{\left(l\right)}, \mathcal{G} \right)$ such that $\spmetric$ decreases after message passing.

\vspace{-5mm}
\section{Bias Amplification in Message Passing}\label{sec:bias_amplification}
\vspace{-1mm}
In this section, we provide both the empirical evidence and the theoretical analysis on bias amplification in message passing.

\begin{figure*}[tbp]
    \centering
    \includegraphics[width=\linewidth]{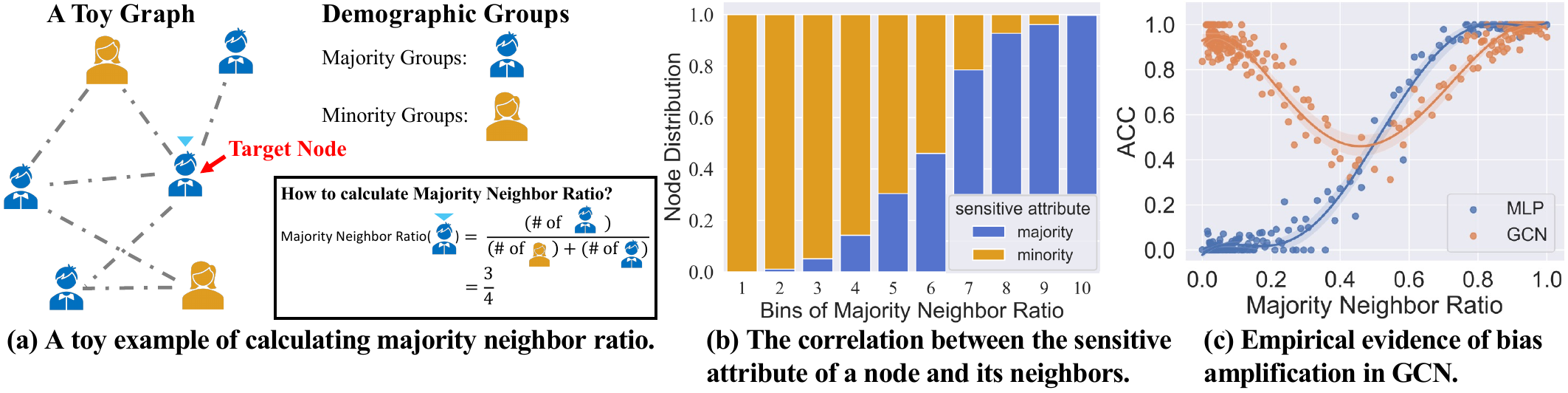}
    \caption{The empirical evidence of bias amplification in GCN on the Pokec-z dataset. Best viewed in color. In (b), majority neighbor ratio is grouped into 10 equal-width bins with width being $0.1$, i.e., $[0, 0.1), [0.1, 0.2),\ldots,[0.9, 1.0]$.}
    \vspace{-6mm}
    \label{fig:empirical_evidence}
\end{figure*}

\vspace{-3mm}
\subsection{Empirical Evidence}\label{subsec:empirical_evidence}
\vspace{-2mm}
We first empirically illustrate that message passing could amplify the bias. 
The design of our experiment is based on the assumption that the learning results (e.g., class probabilities) from a fair model (e.g., FairGNN~\cite{dai2021say}) should be independent of the sensitive attribute~\cite{dai2021say}. Therefore, we use a logistic regression classifier to predict the sensitive attribute of a node using its corresponding learning result output by a model, where the accuracy of the sensitive attribute estimator naturally serves as an indicator of how biased a model is. 

We investigate the effect of bias amplification of message passing on the Pokec-z dataset by comparing the behavior of a two-layer multi-layer perception (MLP) and a two-layer GCN.\footnote{Details about the Pokec-z dataset is deferred to Section~\ref{sec:experiments} and Appendix~\ref{sec:appendix_datasets}.} Note that the only difference between the MLP and GCN is whether the message passing is utilized or not, while the hidden dimensions and the nonlinear activation function are the same with each other. The details for the empirical evaluation are as follows. First, we train the MLP and GCN to predict labels using the node features without sensitive attributes as input. When the graph neural networks reach the best performance, we freeze the parameters of the first layer of both the MLP and GCN. Hence, the first layer of the MLP and GCN can serve as embedding extractors. Second, we train two logistic regression classifiers to predict sensitive attributes with the embeddings extracted from the MLP and GCN, respectively. If the classifier can accurately predict the sensitive attribute, it implies that those embeddings contain rich sensitive information. In this way, we use the accuracy as a proxy to measure the dependency between sensitive attributes and embeddings extracted from the two models.

\vspace{-1mm}
The evaluation results are presented in Figure~\ref{fig:empirical_evidence}. From the figure, we have three key observations. First, from Figure~\ref{fig:empirical_evidence}(b), we can observe a strong positive correlation between the sensitive attribute of a node and the sensitive attribute of its neighbors, i.e., a node tends to have the same sensitive attribute as the majority of its neighbors. Second, as shown in Figure~\ref{fig:empirical_evidence}(c), the MLP predicts the sensitive attribute of all nodes as the sensitive attribute of the majority demographic group, even when a node and all its neighbors do not belong to the majority demographic group. Third, in Figure~\ref{fig:empirical_evidence}(c), the GCN makes much more accurate predictions than the MLP when the neighbors of a node are more likely to be in the minority group. 
The first and second observations imply that the embeddings extracted from the MLP have little correlation with sensitive attributes, thus leading to the less biased predictions. On the contrary, the GCN extracts the embeddings containing rich sensitive information for those nodes that belong to the same demographic group as their neighbors, as indicated in the third observation, regardless of the minority group and the majority group.
These results demonstrate that message passing could aggregate the sensitive information from the neighborhood, which further amplifies the bias and guides the logistic regression classifier to correctly predict the sensitive attribute.

\vspace{-2mm}
\subsection{Theoretical Analysis}\label{subsec:theoretical_analysis}
\vspace{-2mm}
\begin{wrapfigure}{r}{0.62\textwidth} 
\vspace{-4mm}
\begin{center}
    \includegraphics[width=0.6\textwidth]{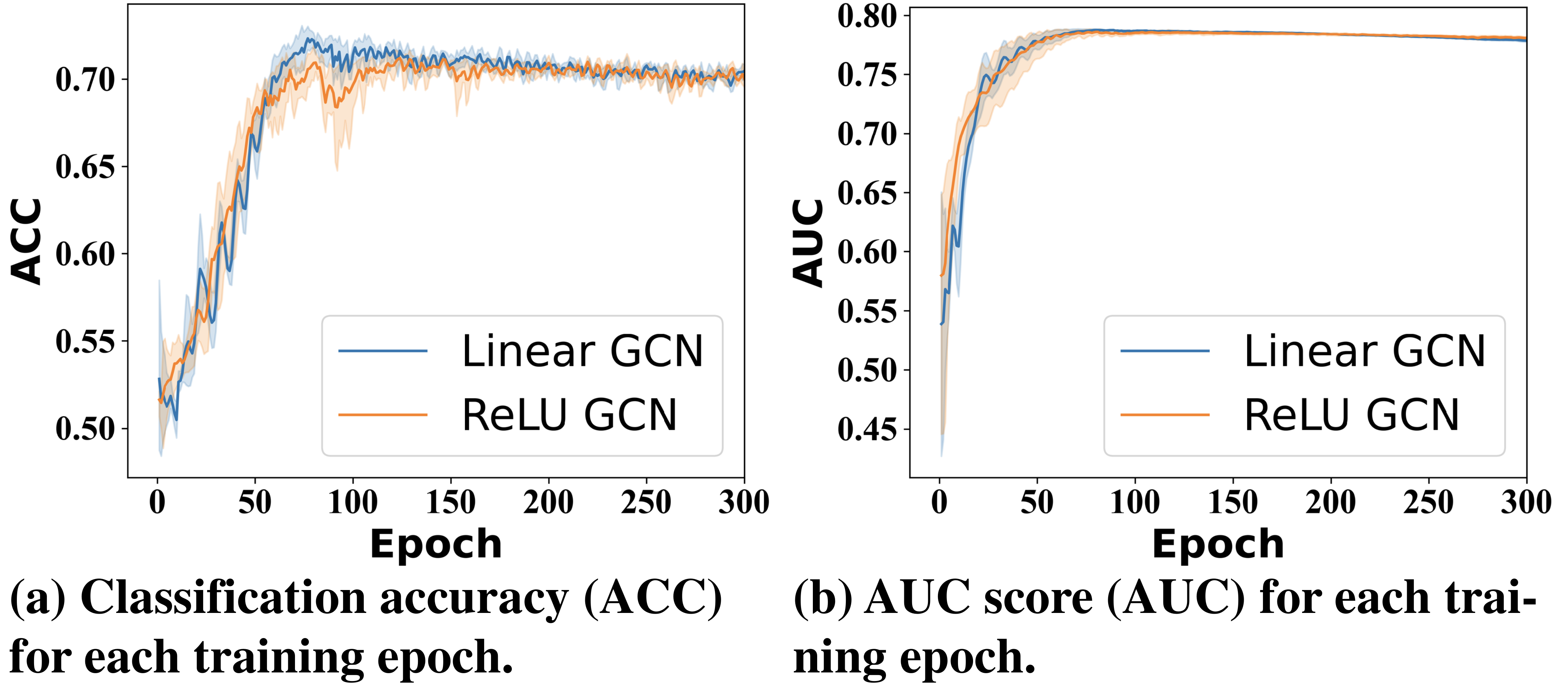}
    \caption{Node classification accuracy (ACC) and AUC score (AUC) curves of linear GCN (Linear GCN) vs. nonlinear GCN (ReLU GCN) on the NBA dataset.}
    \label{fig:expressive_power}
\end{center}
\vspace{-6mm}
\end{wrapfigure} 

For analysis purposes, we consider a binary sensitive attribute and an $L$-layer linear GCN with row normalization (i.e., $\alpha_{ij} = \frac{1}{d_i + 1}$) for binary node classification, where linear GCN is a special type of GCN model without the nonlinear activation (e.g.,~\cite{wu2019simplifying}).\footnote{Our analysis can be generalized to non-binary sensitive attribute and multi-class classification.} Although our analysis relies on linearity, recent works have shown that lack of nonlinearity in GCN enjoys almost the same expressive power as nonlinear GCN~\cite{xu2021optimization, wei2022understanding}. Moreover, as shown in Figure~\ref{fig:expressive_power}, linear GCN exhibits similar or even slightly better node classification accuracy than its nonlinear counterpart, which also demonstrates that linearizing GCN is a good alternative to understand the behavior of GNN models.


\vspace{-1pt}
To analyze the connection between the bias amplification phenomenon and graph topology, we make the assumption (Assumption~\ref{asmp:random_graph}) concerning the generation of graphs. Specifically, we use random graphs to analyze the key properties in Proposition \ref{prop:graph_property} of GNNs, which has been widely adopted in previous works~\cite{keriven2020convergence, jiang2022fmp, li2023ood}. 
For example, FMP~\cite{jiang2022fmp} investigates the connection between group fairness and graph properties using random graphs as the basis for analysis. OOD-GNN~\cite{li2023ood} examines the out-of-distribution generalization abilities of GNNs on specific random graphs with distribution shifts. And Keriven et al.~\cite{keriven2020convergence} explores the convergence and stability of GCNs by analyzing their behavior on standard models of random graphs.

\vspace{-1pt}
\begin{assumption} \label{asmp:random_graph}
    The graphs discussed in theoretical analysis are generated by the Gilbert random graph model \cite{zhou2009graph}, in which each edge is added independently with a fixed possibility $P$.
\end{assumption}

\vspace{-1mm}
To measure how biased a GNN is after message passing, we further make the following assumption on the existence of ideal fair node embedding. The intuition is that a fair graph neural network would output fair node embeddings without sensitive information after the last hidden layer. 
Following this intuition, a node embedding output by any GNN can be decomposed into a fair node embedding and the residual (i.e., for a fair GNN, the residual would be $0$).

\begin{assumption} \label{asmp:exist_fair_node_emb}
    The output node embedding $\mathbf{z}$ from the last hidden layer is a linear combination of fair embedding $\mathbf{z}_t$ and bias residual $\mathbf{z}_b$, i.e., $\mathbf{z} = \mathbf{z}_t + \mathbf{z}_b$, where the fair embedding gives fair prediction while the bias residual corresponds to the bias in predictions. Written in matrix form, we have $\mathbf{Z} = \mathbf{Z}_t + \mathbf{Z}_b$.
\end{assumption}

\vspace{-1mm}
Supposing that Assumption \ref{asmp:exist_fair_node_emb} holds, we show in Lemma \ref{lm:linear_node_embed} that for any $v_i \in \mathcal{V}$, the output embedding from any hidden layer can be viewed as a linear combination of fair embedding and bias residual. Proof is deferred in Appendix \ref{appendix:linear_combine}.

\begin{lemma} \label{lm:linear_node_embed}
    (\textbf{Linear decomposition of a node embedding}) Suppose that Assumption \ref{asmp:exist_fair_node_emb} holds. Given an input graph $\mathcal{G} = \left\{\mathcal{V}, \mathbf{A}, \mathbf{X}\right\}$ and an $L$-layer linear GCN with row normalization, we have
    \vspace{-2mm}
    \begin{equation}
        \mathbf{h}_i^{\left(l\right)} = \left(\mathbf{t}_i^{\left(l\right)} + \mathbf{b}_i^{\left(l\right)}\right) \mathbf{W}^{\left(1\right)} \dots \mathbf{W}^{\left(l-1\right)}
    \end{equation}
    \vspace{-5mm}
    
    \noindent for any $v_i \in \mathcal{V}$ and any hidden layer $l\in\left\{1, \dots, L\right\}$, where $\mathbf{t}_i^{\left(l\right)} = \widetilde{\mathbf{A}}^{l-1}(\widetilde{\mathbf{A}}^L)^{\dagger}  \mathbf{z}_t \mathbf{W}^{\dagger}$ is the fair embedding,  $\mathbf{b}_i^{\left(l\right)} = \widetilde{\mathbf{A}}^{l-1}(\widetilde{\mathbf{A}}^L)^{\dagger} \mathbf{z}_b \mathbf{W}^{\dagger}$ is the bias residual, $\widetilde{\mathbf{A}} = \mathbf{D}^{-1} \mathbf{A}$ is the row-normalized adjacency matrix, and $\mathbf{W}^{\dagger}$ is the Moore–Penrose inverse of $\mathbf{W} = \mathbf{W}^{\left(1\right)}\mathbf{W}^{\left(2\right)}\dots \mathbf{W}^{\left(L\right)}$.
\end{lemma}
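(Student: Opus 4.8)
The plan is to unwind the $L$-layer linear GCN recursion explicitly, express the final embedding $\mathbf{Z}$ in terms of the input $\mathbf{X}$ and the composed weight $\mathbf{W}$, then invert that relation to write $\mathbf{X}$ in terms of the prescribed decomposition $\mathbf{Z} = \mathbf{Z}_t + \mathbf{Z}_b$ from Assumption \ref{asmp:exist_fair_node_emb}, and finally substitute back to obtain the claimed form of $\mathbf{h}_i^{(l)}$ at every intermediate layer $l$. Concretely, since the GCN is linear with row-normalized propagation $\widetilde{\mathbf{A}} = \mathbf{D}^{-1}\mathbf{A}$, one layer acts as $\mathbf{H}^{(l+1)} = \widetilde{\mathbf{A}}\,\mathbf{H}^{(l)}\mathbf{W}^{(l)}$, so by induction $\mathbf{H}^{(l)} = \widetilde{\mathbf{A}}^{\,l-1}\mathbf{X}\,\mathbf{W}^{(1)}\cdots\mathbf{W}^{(l-1)}$ (using the convention $\mathbf{H}^{(1)}=\mathbf{X}$ and the empty product equal to the identity). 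In particular at $l = L+1$ we get the output embedding $\mathbf{Z} = \widetilde{\mathbf{A}}^{L}\mathbf{X}\mathbf{W}$ where $\mathbf{W} = \mathbf{W}^{(1)}\cdots\mathbf{W}^{(L)}$.

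Next I would solve for $\mathbf{X}$ using Moore--Penrose inverses: from $\mathbf{Z} = \widetilde{\mathbf{A}}^{L}\mathbf{X}\mathbf{W}$ one obtains $\mathbf{X} = (\widetilde{\mathbf{A}}^{L})^{\dagger}\mathbf{Z}\,\mathbf{W}^{\dagger}$ (at least as the natural least-squares solution; this is where the decomposition is \emph{defined} rather than uniquely derived). Plugging $\mathbf{Z} = \mathbf{Z}_t + \mathbf{Z}_b$ gives $\mathbf{X} = (\widetilde{\mathbf{A}}^{L})^{\dagger}\mathbf{Z}_t\mathbf{W}^{\dagger} + (\widetilde{\mathbf{A}}^{L})^{\dagger}\mathbf{Z}_b\mathbf{W}^{\dagger}$. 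Substituting this into $\mathbf{H}^{(l)} = \widetilde{\mathbf{A}}^{\,l-1}\mathbf{X}\,\mathbf{W}^{(1)}\cdots\mathbf{W}^{(l-1)}$ yields
\[
\mathbf{H}^{(l)} = \Bigl(\widetilde{\mathbf{A}}^{\,l-1}(\widetilde{\mathbf{A}}^{L})^{\dagger}\mathbf{Z}_t\mathbf{W}^{\dagger} + \widetilde{\mathbf{A}}^{\,l-1}(\widetilde{\mathbf{A}}^{L})^{\dagger}\mathbf{Z}_b\mathbf{W}^{\dagger}\Bigr)\mathbf{W}^{(1)}\cdots\mathbf{W}^{(l-1)},
\]
and reading off row $i$ identifies $\mathbf{t}_i^{(l)}$ and $\mathbf{b}_i^{(l)}$ exactly as stated, with the trailing factor $\mathbf{W}^{(1)}\cdots\mathbf{W}^{(l-1)}$ pulled out front of the $\mathbf{t},\mathbf{b}$ definitions as written in the lemma.

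The main obstacle is the handling of the pseudoinverses: the identities $(\widetilde{\mathbf{A}}^{L})^{\dagger}\widetilde{\mathbf{A}}^{L} = I$ and $\mathbf{W}\mathbf{W}^{\dagger} = I$ do \emph{not} hold in general (they require full column/row rank respectively), so the step ``$\mathbf{Z} = \widetilde{\mathbf{A}}^{L}\mathbf{X}\mathbf{W} \Rightarrow \mathbf{X} = (\widetilde{\mathbf{A}}^{L})^{\dagger}\mathbf{Z}\mathbf{W}^{\dagger}$'' should be read as adopting this least-squares representative of $\mathbf{X}$ (equivalently, restricting attention to the component of $\mathbf{X}$ in the relevant row/column spaces) rather than a genuine equality for arbitrary $\mathbf{X}$. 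I would make this precise by either invoking a rank assumption on $\mathbf{W}$ and on $\widetilde{\mathbf{A}}$ (so that the inverses act as true inverses on the relevant subspaces), or by stating explicitly that Assumption \ref{asmp:exist_fair_node_emb} fixes $\mathbf{Z}_t,\mathbf{Z}_b$ in the column space of $\widetilde{\mathbf{A}}^{L}$ and row space of $\mathbf{W}$, on which the pseudoinverse identities are exact. Everything else is a routine induction on $l$ together with associativity of matrix multiplication.
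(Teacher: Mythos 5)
Your proposal follows essentially the same route as the paper's own proof in Appendix~\ref{appendix:linear_combine}: unwind the linear recursion to $\mathbf{Z} = \widetilde{\mathbf{A}}^{L}\mathbf{H}^{(1)}\mathbf{W}$, take $\mathbf{H}^{(1)} = (\widetilde{\mathbf{A}}^{L})^{\dagger}\mathbf{Z}\mathbf{W}^{\dagger}$ as a feasible preimage, split via $\mathbf{Z} = \mathbf{Z}_t + \mathbf{Z}_b$, and propagate forward to layer $l$. Your explicit caveat about the pseudoinverse identities failing without rank conditions is in fact handled more carefully than in the paper, which only remarks ``if there exists a solution, then a feasible solution is\dots''.
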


\vspace{-1mm}
Lemma \ref{lm:linear_node_embed} offers a fresh perspective on the interplay between message passing and fairness. In particular, we will show that message passing alters the distribution of bias residuals associated with all nodes, thereby influencing fairness. Specifically, when the bias residuals of all nodes converge to the same point after message passing, the output node embeddings from the last hidden layer equals to the sum of fair node embeddings and a constant vector, resulting in fair prediction results. Conversely, if the differences between the bias residuals from different demographic groups are magnified through message passing, the unfairness in prediction results will be also amplified. Therefore, the distributional shift in the bias residual reveals how message passing would affect the fairness.

For GCNs with row normalization, message passing essentially performs a weighted average over the bias residuals from local neighborhoods. Since the summation of the weight from all the neighbors is $1$ for row normalization, the centroid of the distribution of the bias residual for any demographic group remains unchanged during message passing. Please see the detailed proof in Appendix \ref{appendix:GCN_analyse}. In this case, an increase in the difference of bias residuals is directly related to the shrinkage in the expected distance between the bias residual of any node and the centroid of its corresponding demographic group. For example, if the expected distance is much larger than the distance between centroids, then the distributions of bias residuals from various demographic groups likely exhibit substantial overlapping areas. Within this overlap, identifying the demographic group to which a specific bias residual belongs becomes challenging. This suggests that when the distance between the centroids of two demographic groups is fixed, an increase in the expected distance results in a considerable expansion of the overlapping area. Consequently, the difference between the two distributions diminishes. Meanwhile, an extreme unfair situation is that the bias residual of any node exactly falls into the centroid of its demographic group. Then the bias residual essentially becomes an indicator for sensitive attributes.
As a result, even if the input data excludes any information related to sensitive attributes, GNNs can readily distinguish the sensitive attribute associated with each node. It grants GNNs a convenient path to infer labels through the leaked information of sensitive attributes, thus leading to severe unfairness.

To better understand the change of the expected distances during message passing, we introduce a distance-based bias that captures the model bias. Let us denote the centroid of the bias residuals from the majority as $\mu_0$ and denote the centroid from the minority as $\mu_1$. 

\begin{definition}\label{defn:distance_based_bias}
(\textbf{Distance-based bias}) Suppose that we have an input graph $\mathcal{G} = \{\mathcal{V}, \mathbf{A}, \mathbf{X}\}$ and an $L$-layer GCN. For any $l$-th hidden layer, the distance-based bias is defined as the reciprocal of the expected squared Euclidean distance from the bias residual of each node to the centroid of its corresponding demographic group, which is shown below.
\vspace{-1mm}
\begin{equation}\label{eq:distance_based_bias}
    \text{bias}^{\left(l\right)}(\mathcal{G}, \mathbf{B}^{\left(l\right)}, s) = \frac{1}{\mathbb{E}_{v_i}[\|\mathbf{b}^{\left(l\right)}_i - \mathbf{\mu}(v_i)\|_2^2]}
\end{equation}
\vspace{-2mm}

\noindent where $\mathbf{b}^{\left(l\right)}_i$ is the $i$-th row of $\mathbf{B}^{\left(l\right)}$, and $\mathbf{\mu}(v_i)$ is the centroid of the bias residuals of all nodes belonging to the demographic group of node $v_i$.
\end{definition}
\vspace{-1mm} \noindent With the distance-based bias, we furthermore show in Theorem \ref{thm:bias_amplification} that the expected squared distance would shrink after message passing in any $l$-th hidden layer, which is equivalent to the amplification of the distance-based bias. Proof is deferred to Appendix ~\ref{appendix:GCN_analyse}.

\begin{figure*}
    \centering
    \includegraphics[width=.85\linewidth]{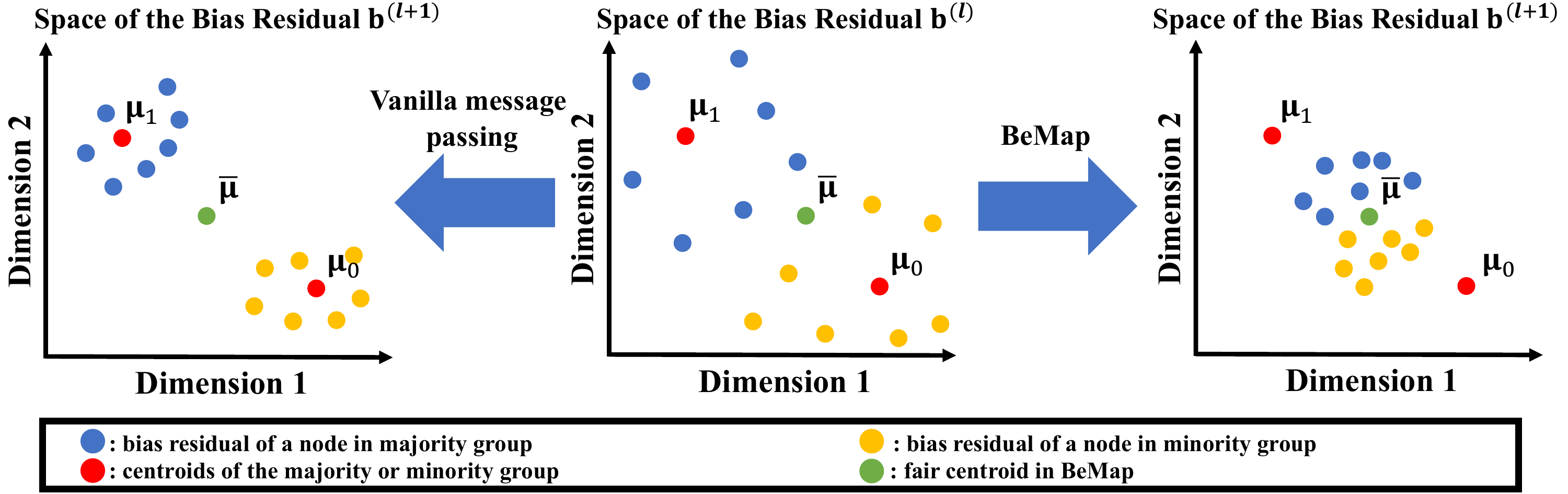}
    \caption{An illustrative example of \method. After \method, the bias residuals will move towards the fair centroid (the green point), whereas, after the vanilla message passing in GCN, they will move towards the centroid of the majority group and the minority group (the red points).}
    \vspace{-6mm}
    \label{fig:bemap_example}
\end{figure*}

\begin{theorem}\label{thm:bias_amplification}
(\textbf{Distance shrinkage}) Suppose we have an $L$-layer linear GCN with row normalization and an input graph $\mathcal{G} = \{\mathcal{V}, \mathbf{A}, \mathbf{X}\}$.
In the $l$-th hidden layer, let \smash{$\mathbf{b}_i^{\left(l\right)}$} and \smash{$d_i$} be the biased embedding and the degree of any \smash{$v_i \in \mathcal{V}$}, respectively. For any $l$-th hidden layer, we have 
\vspace{-1mm}
\begin{equation} \label{eq:dist_shrink_gcn}
\begin{aligned}
    \mathbb{E}_{v_i}[\|\mathbf{b}^{\left(l+1\right)}_i - \mathbf{\mu}(v_i)\|_2^2] &= \mathbb{E}_{v_i}[\frac{1}{d_i}] \mathbb{E}_{v_i}[\|\mathbf{b}^{\left(l\right)}_i - \mathbf{\mu}(v_i)\|_2^2],
\end{aligned}
\end{equation}
which means bias amplification after message passing $\left(\text{bias}^{\left(l\right)}(\mathcal{G}, \mathbf{B}^{\left(l+1\right)}, s) < \text{bias}^{\left(l\right)}(\mathcal{G}, \mathbf{B}^{\left(l\right)}, s) \right)$.
\end{theorem}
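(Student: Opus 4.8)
The plan is to show that one layer of row-normalized message passing acts on each bias residual $\mathbf{b}_i^{(l)}$ as an averaging operator over the self-augmented neighborhood $\widehat{\mathcal{N}}(v_i)$, then compute the effect of this averaging on the expected squared distance to the group centroid. First I would recall from the discussion preceding the theorem (and from Appendix~\ref{appendix:GCN_analyse}, which we may invoke) two facts: (i) $\mathbf{b}_i^{(l+1)} = \sum_{v_j \in \widehat{\mathcal{N}}(v_i)} \frac{1}{d_i+1}\,\mathbf{b}_j^{(l)}$, i.e.\ the new residual is the arithmetic mean of the old residuals over $v_i$ together with its $d_i$ neighbors; and (ii) because the aggregation weights sum to $1$ and, under Assumption~\ref{asmp:random_graph}, a node's neighbors are drawn so that the conditional distribution of a neighbor's residual within a demographic group matches that group's distribution, the centroid $\mu(v_i)$ of each demographic group is invariant under message passing. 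Fact (ii) is what lets us keep the same $\mu(v_i)$ on both sides of Eq.~\eqref{eq:dist_shrink_gcn}.

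Next I would center everything at the group centroid. Write $\mathbf{e}_j := \mathbf{b}_j^{(l)} - \mu(v_j)$ for the centered residual; by fact (ii), within the neighborhood of $v_i$ (all of whose members we take to share $v_i$'s group, consistent with the strong homophily observed empirically and used throughout the analysis) we have $\mathbf{b}_i^{(l+1)} - \mu(v_i) = \frac{1}{d_i+1}\sum_{v_j \in \widehat{\mathcal{N}}(v_i)} \mathbf{e}_j$. Taking the squared norm and then the expectation over the random graph, the cross terms $\mathbb{E}[\langle \mathbf{e}_j, \mathbf{e}_k\rangle]$ for $j \neq k$ vanish: conditioned on group membership the centered residuals of distinct nodes are independent with mean zero (this is the key modeling step supplied by the Gilbert model plus the centroid-at-mean definition). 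Hence
\begin{equation*}
\mathbb{E}\big[\|\mathbf{b}_i^{(l+1)} - \mu(v_i)\|_2^2 \,\big|\, d_i\big] = \frac{1}{(d_i+1)^2}\,(d_i+1)\,\mathbb{E}\big[\|\mathbf{e}\|_2^2\big] = \frac{1}{d_i+1}\,\mathbb{E}_{v_i}\big[\|\mathbf{b}_i^{(l)} - \mu(v_i)\|_2^2\big],
\end{equation*}
and averaging over $v_i$ (so that $d_i$ becomes random, independent of the residual magnitudes) yields Eq.~\eqref{eq:dist_shrink_gcn} with the factor $\mathbb{E}_{v_i}[\frac{1}{d_i}]$ — matching the statement up to the cosmetic $d_i$ versus $d_i+1$ convention for the self-augmented degree. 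Finally, since every node has at least one neighbor in addition to itself (or, more carefully, since $\mathbb{E}_{v_i}[1/d_i] < 1$ whenever the graph is not edgeless, which holds almost surely under Assumption~\ref{asmp:random_graph} for $P>0$ and $n$ large), the multiplicative factor is strictly less than $1$, so the expected squared distance strictly shrinks; taking reciprocals via Definition~\ref{defn:distance_based_bias} gives $\text{bias}^{(l)}(\mathcal{G}, \mathbf{B}^{(l+1)}, s) < \text{bias}^{(l)}(\mathcal{G}, \mathbf{B}^{(l)}, s)$.

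The main obstacle I anticipate is rigorously justifying the vanishing of the cross terms and the invariance of the centroid: both rest on treating the centered residuals $\{\mathbf{e}_j\}$ within a group as i.i.d.\ mean-zero across the randomness of the Gilbert graph, and on the assumption that a node's 1-hop neighbors lie in its own demographic group (or that the analysis is carried out per-group so that mixed neighborhoods are handled by conditioning). I would need to state this independence/homogeneity carefully — most cleanly by defining $\mu(v_i)$ as the population mean of the residual over $v_i$'s group and assuming the graph-generation randomness is independent of the residual assignment — so that $\mathbb{E}[\langle \mathbf{e}_j,\mathbf{e}_k\rangle]=\langle \mathbb{E}\mathbf{e}_j,\mathbb{E}\mathbf{e}_k\rangle=0$. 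The remaining algebra (expanding the square of a sum of $d_i+1$ terms, and the independence of $1/d_i$ from the per-node residual spread) is routine.
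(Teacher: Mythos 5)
Your proposal follows essentially the same route as the paper's proof in Appendix~\ref{subsec:GCN_dist_shrink}: express the post-aggregation residual as a weighted average of centered residuals, expand the squared norm, kill the cross terms using the independence of the bias residuals from the graph topology under the Gilbert random-graph assumption (the paper's Propositions~\ref{prop:graph_property} and~\ref{prop:indep_topo_feat}, where $\mathbb{E}[\hat{\mathbf{b}}_j]=\mathbf{0}$ makes the off-diagonal expectations factor and vanish), and read the multiplicative factor $\mathbb{E}_{v_i}[1/d_i]$ off the diagonal terms before invoking Definition~\ref{defn:distance_based_bias} to conclude bias amplification. The only deviations are ones you flag yourself --- the $d_i$ versus $d_i+1$ normalization convention for the self-augmented neighborhood, and the per-group conditioning needed to make the centered residuals mean-zero --- and these are handled at the same level of rigor as in the paper, so the argument matches.
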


\vspace{-1mm}
Theorem \ref{thm:bias_amplification} demonstrates that message passing drives the convergence of bias residuals towards their centroids, thereby magnifying the differences in the distributions of bias residuals of different demographic groups. Moreover, as the expectation of node degrees increases, the distance shrinks at a faster rate. This distance shrinkage exacerbates distance-based biases, causing unfair predictions.


\vspace{-4mm}
\section{\method: A Fair Message Passing Schema}\label{sec:method}
\vspace{-1mm}
In this section, we first present how to avoid bias amplification in message passing and then propose a fair message passing method named \method. 

\vspace{-3mm}
\subsection{Fair Message Passing}
\vspace{-2mm}
Motivated by the observation in Theorem 1 that bias amplification  can lead to unfairness, we put forward the idea that a fair message passing schema should possess the capability to automatically diminish such differences of bias residuals across various demographic groups. Hence, a fair message passing schema entails two primary objectives: (1) \textbf{centroid consistency}. The centroid of bias residuals becomes the same across different demographic groups after fair message passing; (2) \textbf{distance shrinkage}. The distance between bias residuals and their centroids will shrink through message passing. An illustrative example is presented in Figure \ref{fig:bemap_example}.

To achieve our first objective \textbf{centroid consistency}, as shown in Lemma~\ref{lm:sufficiency_mean_consist}, we have to make sure the proportion of neighbors from each demographic group is similar.

\begin{lemma}\label{lm:sufficiency_mean_consist}
(\textbf{Sufficient condition for centroid consistency}) Suppose Assumption~\ref{asmp:random_graph} and Assumption~\ref{asmp:exist_fair_node_emb} hold, and we are given an input graph $\mathcal{G} = \left\{\mathcal{V}, \mathbf{A}, \mathbf{X}\right\}$ and an $L$-layer linear GCN with row normalization.
For any node $v_i\in\mathcal{V}$, let ${\widehat N}_0\left(v_i\right)$ and ${\widehat N}_1\left(v_i\right)$ be the number of neighbors in $\mathcal{\widehat N}\left(v_i\right) = \mathcal{N}\left(v_i\right) \cup \left\{v_i\right\}$ that belong to the minority group and majority group, respectively. In the $l$-th hidden layer, the centroid of the bias residual $\mathbf{b}_i^{(l)}$ keeps the same for any $v_i\in\mathcal{V}$ when 
\vspace{-2mm}
\begin{equation}\label{eq:sufficiency_fair_prototype}
    \frac{{\widehat N}_0\left(v_i\right)}{{\widehat N}_1\left(v_i\right)} = \frac{r_0}{r_1}, \quad\forall v_i\in\mathcal{V}
\end{equation}
\noindent 
where $r_s, s \in \left\{0, 1\right\}$ is the ratio of neighbors from demographic group $s$, and $r_0 + r_1 = 1$.
\end{lemma}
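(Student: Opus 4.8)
The plan is to follow the bias-residual component isolated by Lemma~\ref{lm:linear_node_embed} through a single message-passing step and to show that, under the balance condition \eqref{eq:sufficiency_fair_prototype}, this step drives \emph{every} node (in expectation over the random graph) to one and the same point; in particular the two group centroids must then coincide, which is exactly centroid consistency. Throughout I will index by the sensitive value $s\in\{0,1\}$, writing $\mathcal{V}_s$ for the set of nodes with sensitive attribute $s$, $\mu_s$ for the centroid of $\{\mathbf{b}_j^{(l)}: v_j\in\mathcal{V}_s\}$, and $r_s$ for the global ratio, so that $\widehat{N}_s$, $\mu_s$, $r_s$ all use the same labeling.

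First I would reduce the claim to a statement about the aggregation of bias residuals only. By Lemma~\ref{lm:linear_node_embed} the layer-$l$ output splits as $\mathbf{h}_i^{(l)} = (\mathbf{t}_i^{(l)} + \mathbf{b}_i^{(l)})\mathbf{W}^{(1)}\cdots\mathbf{W}^{(l-1)}$, and the residual part propagates through the GCN exactly as the embedding does. With row normalization ($\alpha_{ij} = \tfrac{1}{d_i+1}$), message passing maps $\mathbf{b}_i^{(l)}$ to $\widehat{\mathbf{b}}_i^{(l)} = \tfrac{1}{d_i+1}\sum_{v_j\in\widehat{\mathcal{N}}(v_i)} \mathbf{b}_j^{(l)}$, after which the same linear map $\mathbf{W}^{(l)}$ is applied to every node. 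Since a linear map sends equal vectors to equal vectors, it is enough to prove that $\mathbb{E}[\widehat{\mathbf{b}}_i^{(l)}]$ does not depend on $v_i$; the group centroids at layer $l+1$ then agree.

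Next I would split the neighborhood by demographic group. Writing $\widehat{\mathcal{N}}_s(v_i)$ for the members of $\widehat{\mathcal{N}}(v_i)$ with sensitive attribute $s$, we have $|\widehat{\mathcal{N}}_s(v_i)| = \widehat{N}_s(v_i)$ and $\widehat{N}_0(v_i)+\widehat{N}_1(v_i) = d_i+1$, so
\[
\widehat{\mathbf{b}}_i^{(l)} = \frac{1}{d_i+1}\Big(\sum_{v_j\in\widehat{\mathcal{N}}_0(v_i)}\mathbf{b}_j^{(l)} + \sum_{v_j\in\widehat{\mathcal{N}}_1(v_i)}\mathbf{b}_j^{(l)}\Big).
\]
Under Assumption~\ref{asmp:random_graph}, conditioned on the group memberships and the layer-$l$ residuals, the neighbors of $v_i$ inside group $s$ form a uniformly random size-$\widehat{N}_s(v_i)$ subset of $\mathcal{V}_s$, so $\mathbb{E}\big[\tfrac{1}{\widehat{N}_s(v_i)}\sum_{v_j\in\widehat{\mathcal{N}}_s(v_i)}\mathbf{b}_j^{(l)}\big]$ equals the group centroid $\mu_s$. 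Hence $\mathbb{E}[\widehat{\mathbf{b}}_i^{(l)}] = \tfrac{\widehat{N}_0(v_i)}{d_i+1}\mu_0 + \tfrac{\widehat{N}_1(v_i)}{d_i+1}\mu_1$. Plugging the balance condition \eqref{eq:sufficiency_fair_prototype} together with $\widehat{N}_0(v_i)+\widehat{N}_1(v_i)=d_i+1$ and $r_0+r_1=1$ forces $\widehat{N}_s(v_i) = r_s(d_i+1)$, so $\mathbb{E}[\widehat{\mathbf{b}}_i^{(l)}] = r_0\mu_0 + r_1\mu_1$ for every $v_i$ — independent of the node and of its group. Averaging over the nodes of each group, and then applying the common map $\mathbf{W}^{(l)}$, gives identical centroids at layer $l+1$, which is the assertion.

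I expect the delicate step to be the "uniformly random representative subset" claim: it requires disentangling the randomness in the edges incident to $v_i$ from the randomness that already produced the layer-$l$ residuals $\mathbf{b}_j^{(l)}$ (which depend on $\widetilde{\mathbf{A}}$ via Lemma~\ref{lm:linear_node_embed}). The clean route is to condition on the group memberships and the layer-$l$ residual configuration and invoke only the conditional law of $v_i$'s incident edges under the Gilbert model — the same large-graph / expectation sense in which random graphs are used elsewhere in the analysis. One should also record that the self-loop guarantees $v_i\in\widehat{\mathcal{N}}(v_i)$, so the count for $v_i$'s own group is at least $1$ and all ratios and normalizations are well defined.
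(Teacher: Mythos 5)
Your proposal is correct and follows essentially the same route as the paper's proof in Appendix~\ref{subsec:centroid_consist_append}: split the aggregation $\sum_{v_j}\alpha_i\mathbf{b}_j^{(l)}$ by demographic group, use the random-graph/independence assumption (the paper packages this as Proposition~\ref{prop:indep_topo_feat_multigroup}, you invoke the exchangeability of the Gilbert model directly) to replace each within-group neighborhood average by the group centroid $\mu_s$, and then apply the balance condition to obtain the node-independent combination $r_0\mu_0+r_1\mu_1$. The only cosmetic difference is that you compute the conditional expectation per node whereas the paper computes $\mathbb{E}_{v_i\in\mathcal{V}}[\mathbf{b}_i^{(l+1)}]$ and then observes the derivation is insensitive to the group of $v_i$; your explicit flag of the entanglement between the edge randomness and the residuals $\mathbf{b}_j^{(l)}$ is exactly the issue the paper's independence propositions are meant to address.
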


\vspace{-1mm}
Lemma~\ref{lm:sufficiency_mean_consist} states that, as long as the self-augmented neighborhood of any node in the graph has the same ratio of neighbors from each demographic group, the bias residuals would be centered around the same centroid. Actually, the new fair centroid, which is denoted as $\mathbf{\bar \mu}$, can be expressed as a linear combination of the original centroid of each demographic group, \ie, $\mathbf{\bar \mu} = r_0 \mathbf{\mu}_0 + r_1 \mathbf{\mu}_1$. Therefore, we can change the position of the fair centroid $\mathbf{\bar \mu}$ by changing the ratio of each demographic group. However, if edge deletion is used to control the ratio for every node, then the ratio $r_s$ are not supposed to be some extreme values, \ie, $0$ or $1$. It is because that those extreme values will result in the removal of a large number of edges, thus leading to a severe degradation in the utility of the model. To ensure the balance between utility and fairness, we set the ratio of the minority group and majority group to be the same, \ie, $r_0 = r_1 = \frac{1}{2}$.

To achieve our second objective \textbf{distance shrinkage}, as shown in Theorem~\ref{thm:bemap}, by increasing the balanced self-augmented neighborhood to  sufficiently large, the expected squared distance between the bias residual and their fair centroids $\mathbf{\bar \mu}$  will shrink after message passing via self-augmented neighborhood. Proof is deferred to Appendix~\ref{appendix:BeMap_analyse}.

\vspace{-1mm}
\begin{theorem}\label{thm:bemap}
    (\textbf{Fair message passing}) Suppose Lemma~\ref{lm:sufficiency_mean_consist} holds. As long as ${\widehat N}_0\left(v_i\right) + {\widehat N}_1\left(v_i\right)\geq 4$, $\forall v_i\in\mathcal{V}$, the expected squared distance between the bias residual $\mathbf{b}_i^{(l)}$ and the fair centroid $\mathbf{\bar \mu}$ will shrink after message passing. Mathematically, we have 
    \vspace{-1mm}
    \begin{equation} \label{eq:fair_dist_shrink}
        \mathbb{E}_{v_i}[\|\mathbf{b}_i^{(l+1)} - \mathbf{\bar \mu}\|_2^2] < \mathbb{E}_{v_i}[\|\mathbf{b}_i^{(l)} - \mathbf{\bar \mu}\|_2^2], \quad l\in\left\{1, \ldots, L\right\}
    \end{equation}
\end{theorem}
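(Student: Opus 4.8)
\textbf{Proof proposal for Theorem~\ref{thm:bemap}.}

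The plan is to reduce the statement to the same computation that underlies Theorem~\ref{thm:bias_amplification}, but now carried out with respect to the \emph{fair} centroid $\mathbf{\bar\mu}$ and under the balance constraint supplied by Lemma~\ref{lm:sufficiency_mean_consist}. First I would recall from Lemma~\ref{lm:linear_node_embed} and the discussion preceding Theorem~\ref{thm:bias_amplification} that, for row-normalized message passing, $\mathbf{b}_i^{(l+1)} = \sum_{v_j \in \widehat{\mathcal{N}}(v_i)} \alpha_{ij}\,\mathbf{b}_j^{(l)}$ with $\sum_j \alpha_{ij} = 1$ and $\alpha_{ij} = \frac{1}{\widehat{N}_0(v_i) + \widehat{N}_1(v_i)}$ after the balance-aware sampling replaces the original neighborhood by the balanced one of size $m_i := \widehat{N}_0(v_i) + \widehat{N}_1(v_i)$. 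Since Lemma~\ref{lm:sufficiency_mean_consist} gives $r_0 = r_1 = \tfrac12$, the common centroid is $\mathbf{\bar\mu} = \tfrac12(\mathbf{\mu}_0 + \mathbf{\mu}_1)$, and crucially it is invariant: $\mathbb{E}[\mathbf{b}_i^{(l+1)} \mid v_i] $ averaged over the group equals $\mathbf{\bar\mu}$ as well, because each balanced neighborhood contributes equal weight $\tfrac12$ to each group's centroid. So the translation by $\mathbf{\bar\mu}$ commutes with the averaging, and I may assume without loss of generality that $\mathbf{\bar\mu} = 0$, i.e. work with the centered residuals $\mathbf{c}_j^{(l)} := \mathbf{b}_j^{(l)} - \mathbf{\bar\mu}$.

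Next I would expand the squared norm after message passing:
\begin{equation}
    \|\mathbf{c}_i^{(l+1)}\|_2^2 = \Big\| \frac{1}{m_i}\sum_{v_j \in \widehat{\mathcal{N}}(v_i)} \mathbf{c}_j^{(l)} \Big\|_2^2 = \frac{1}{m_i^2}\sum_{j,k \in \widehat{\mathcal{N}}(v_i)} \langle \mathbf{c}_j^{(l)}, \mathbf{c}_k^{(l)}\rangle.
\end{equation}
Taking expectation over $v_i$ and then over the random graph (Assumption~\ref{asmp:random_graph}, so that membership of each candidate neighbor in $\widehat{\mathcal{N}}(v_i)$ and its residual are, conditionally on the group, governed by the Gilbert model), the diagonal terms $j=k$ contribute $\tfrac{1}{m_i}\,\mathbb{E}_{v_i}[\|\mathbf{c}_i^{(l)}\|_2^2]$ in aggregate, while the off-diagonal terms $j \neq k$ contribute cross-covariances of residuals of independently sampled neighbors. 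Under the random-graph assumption the residuals of two distinct sampled neighbors are independent given their group labels, so $\mathbb{E}[\langle \mathbf{c}_j^{(l)}, \mathbf{c}_k^{(l)}\rangle] = \langle \mathbb{E}[\mathbf{c}_j^{(l)}], \mathbb{E}[\mathbf{c}_k^{(l)}]\rangle$, and since the balanced sampling makes the expected residual of a uniformly-chosen neighbor equal to $\mathbf{\bar\mu}$ (i.e. $0$ after centering), these cross terms vanish in expectation — or, more conservatively, are controlled by $\|\mathbf{\mu}_0 - \mathbf{\bar\mu}\|_2^2 = \|\mathbf{\mu}_1-\mathbf{\bar\mu}\|_2^2$ which is dominated by the within-group variance. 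Collecting terms yields
\begin{equation}
    \mathbb{E}_{v_i}[\|\mathbf{c}_i^{(l+1)}\|_2^2] \le \mathbb{E}_{v_i}\!\Big[\frac{1}{m_i}\Big]\,\mathbb{E}_{v_i}[\|\mathbf{c}_i^{(l)}\|_2^2] + (\text{cross terms}),
\end{equation}
and the hypothesis $m_i \ge 4$ forces the leading coefficient to be at most $\tfrac14$, leaving enough slack to absorb the cross terms and conclude the strict inequality~\eqref{eq:fair_dist_shrink}.

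The main obstacle I anticipate is making the off-diagonal/cross-covariance bookkeeping rigorous: one must be careful that "two distinct sampled neighbors have independent residuals" is actually justified by Assumption~\ref{asmp:random_graph} together with whatever independence the balance-aware sampler injects, and that the residuals $\mathbf{b}_j^{(l)}$ at layer $l$ — which are themselves outputs of $l-1$ rounds of message passing — have not developed correlations across nodes that spoil the factorization. The cleanest route is probably to prove the bound conditionally on the group-label pattern of the sampled neighborhood (where $\widehat{N}_0 = \widehat{N}_1 = m_i/2$ is forced), push the expectation over residuals through using an inductive hypothesis that within each group the layer-$l$ residuals are i.i.d.\ (or at least uncorrelated) with mean $\mathbf{\mu}_s$, and then average over $v_i$. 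The threshold $m_i \ge 4$ is exactly what guarantees $\mathbb{E}_{v_i}[1/m_i] < 1$ with margin $\tfrac14$, so identifying the precise constant in front of the cross terms and checking $\tfrac14 + (\text{const}) < 1$ is the quantitative crux; everything else is the bias-variance decomposition already rehearsed in the proof of Theorem~\ref{thm:bias_amplification}.
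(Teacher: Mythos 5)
Your overall strategy --- expand the quadratic form of the aggregated residual, use the independence granted by Assumption~\ref{asmp:random_graph} to factor expectations of products over distinct neighbors, and extract a $1/\mathrm{degree}$ shrinkage factor --- is the same one the paper uses, but your centering choice diverges from the paper's in a way that makes one of your steps wrong as stated. The paper does \emph{not} center at $\mathbf{\bar\mu}$ inside the expansion: it writes the aggregation as a group-$0$ sum plus a group-$1$ sum, centers each summand at its \emph{own} group centroid $\mathbf{\mu}_s$ (so the within-group off-diagonal terms genuinely have zero mean and vanish), bounds the cross-group term via $\|x+y\|_2^2\le 2\|x\|_2^2+2\|y\|_2^2$ at the cost of a factor of $2$, obtains $\mathbb{E}[2/d_i]\left(r_0 V_0 + r_1 V_1\right)$ with $V_s$ the within-group variance, and only at the end compares $r_0V_0+r_1V_1$ to the variance about $\mathbf{\bar\mu}$. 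The hypothesis ${\widehat N}_0\left(v_i\right)+{\widehat N}_1\left(v_i\right)\ge 4$ is there precisely to force $d_i>2$ so that $\mathbb{E}[2/d_i]<1$ (the $2$ coming from the cross-group bound); it is not, as you suggest, a $1/4$ of ``slack to absorb cross terms.''

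The concrete gap in your version: once you center at $\mathbf{\bar\mu}$, the off-diagonal terms do \emph{not} vanish. For a fixed pair of distinct neighbors with group labels $s_j,s_k$, independence gives $\mathbb{E}[\langle\mathbf{c}_j,\mathbf{c}_k\rangle]=\langle\mathbf{\mu}_{s_j}-\mathbf{\bar\mu},\,\mathbf{\mu}_{s_k}-\mathbf{\bar\mu}\rangle$, which equals $+\|\mathbf{\mu}_0-\mathbf{\bar\mu}\|_2^2$ for same-group pairs and $-\|\mathbf{\mu}_0-\mathbf{\bar\mu}\|_2^2$ for cross-group pairs; and your fallback claim that this quantity is ``dominated by the within-group variance'' has no justification. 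The argument is salvageable, and more cleanly than you propose: because the balanced neighborhood satisfies $\sum_j(\mathbf{\mu}_{s_j}-\mathbf{\bar\mu})=\mathbf{0}$, the off-diagonal terms sum to $\bigl\|\sum_j\mathbb{E}[\mathbf{c}_j]\bigr\|_2^2-\sum_j\|\mathbb{E}[\mathbf{c}_j]\|_2^2=-m_i\|\mathbf{\mu}_0-\mathbf{\bar\mu}\|_2^2\le 0$, which exactly cancels the between-centroid part of the diagonal contribution and yields $\mathbb{E}\|\mathbf{c}_i^{(l+1)}\|_2^2=\tfrac{1}{2m_i}(\sigma_0^2+\sigma_1^2)$ --- in fact a tighter constant than the paper's $\mathbb{E}[2/d_i]$. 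So replace ``the cross terms vanish'' by this explicit cancellation and your route goes through, sharing with the paper's proof the remaining delicate point that the final comparison of the $r_s$-weighted within-group variances against $\mathbb{E}_{v_i}\|\mathbf{b}_i^{(l)}-\mathbf{\bar\mu}\|_2^2$ implicitly weights the groups by $1/2$ rather than by their population fractions.
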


\vspace{-4mm}
\subsection{\method~Algorithm}
\vspace{-2mm}
To achieve \textbf{centroid consistency} and \textbf{distance shrinkage}, we propose a fair message passing algorithm named \method, whose key idea is to perform message passing on a sufficiently large and balanced self-augmented neighborhood (which we call fair neighborhood). To obtain the fair neighborhood, we consider sampling (i.e., edge deletion) over the original self-augmented neighborhood in \method. Note that other types of neighborhood augmentation techniques (e.g., edge addition, edge rewiring) could also be applied to obtain the fair neighborhood, which we leave for future work. Hereafter, \method~is referred to as message passing over the sampled fair neighborhood.

In practice, there are two main challenges in obtaining the fair neighborhood in \method~ due to the long-tailed in many real-world networks. First, there could exist some node $v_i$ such that ${\widehat N}_0\left(v_i\right) = {\widehat N}_1\left(v_i\right)$ and ${\widehat N}_0\left(v_i\right) + {\widehat N}_1\left(v_i\right) \geq 4$ cannot be satisfied simultaneously. In this case, we apply the following two empirical remedies.
\vspace{-2mm}
\begin{itemize}[
    align=left,
    leftmargin=2em,
    itemindent=0pt,
    labelsep=0pt,
    labelwidth=1em,
]
    \item If all neighbors in $\mathcal{\widehat N}\left(v_i\right)$ of a node $v_i$ belong to one demographic group, we sample a subset of $k$ nodes where $k = \max\left\{4, \beta|\mathcal{N}\left(v_i\right)|\right\}$, $|\mathcal{N}\left(v_i\right)|$ is the cardinality of $\mathcal{N}\left(v_i\right)$ and $\beta$ is a hyperparameter. The intuition is that when the local neighborhood is sufficiently large to satisfy Theorem \ref{thm:bemap} (\ie, $k\ge 4$), decreasing the number of neighbors (i.e., the degree $d_i$ of node $v_i$) helps reduce the difference between the expected squared distances before and after message passing as shown in Eq.~\eqref{eq:dist_shrink_gcn}. Thus, it helps decelerate the bias residuals moving towards the original centroid of the demographic group of $v_i$. 
    \vspace{-2mm}
    \item Otherwise, we keep the sampled neighborhood to be balanced, i.e., ${\widehat N}_0\left(v_i\right) = {\widehat N}_1\left(v_i\right)$, by sampling over $\mathcal{N}_0\left(v_i\right)$ if ${\widehat N}_0\left(v_i\right) > {\widehat N}_1\left(v_i\right)$ or $\mathcal{N}_1\left(v_i\right)$ if ${\widehat N}_0\left(v_i\right) < {\widehat N}_1\left(v_i\right)$. By balancing the sampled neighborhood, it helps shrink the bias residual to converge into the fair centroid $\mathbf{\bar \mu}$. 
\end{itemize}

\vspace{-1mm}
Second, there might exist some node $v_i$ such that its neighbors within $L$ hops could contain node(s) whose neighbors always belong to only one demographic group. Note that the receptive field of the $l$-th hidden layer in an $L$-layer GCN is the $l$-hop neighborhood of a node. Then, for such node $v_i$, it is hard to maintain a balanced $l$-hop neighborhood if we apply uniform sampling, so as to satisfy Lemma~\ref{lm:sufficiency_mean_consist}. To alleviate this issue, we propose \textit{balance-aware sampling}. Its key idea is to adjust the sampling probability based on the difference between the numbers of neighbors in the majority group and in the minority group. Specifically, for any node $v_i\in\mathcal{V}$, we define the balance score as 
\begin{equation}\label{eq:balance_score}
    \text{balance}_i = \frac{1}{|{\widetilde N}_0\left(v_i\right) - {\widetilde N}_1\left(v_i\right)| + \delta}
\end{equation}
\noindent where $\delta$ is a hyperparameter to avoid division by zero and ${\widetilde N}_0\left(v_i\right)$ as well as ${\widetilde N}_1\left(v_i\right)$ are the numbers of all neighbors within $L$ hops that belong to the minority group and majority group, respectively. Then in the balance-aware sampling, for any node $v_i\in\mathcal{V}$, the sampling probability of node $v_j\in\mathcal{N}\left(v_i\right)$ is
\begin{equation}\label{eq:balance_aware_sampling}
    P\left(v_j | v_i\right) = \frac{\text{balance}_j}{\sum_{v_k\in\mathcal{N}\left(v_i\right)} \text{balance}_k}
\end{equation}
In general, \method~includes three key steps: pre-processing, balance-aware sampling, and fair message passing. First, during pre-processing, we precompute the sampling probability based on Eq. \eqref{eq:balance_aware_sampling}. Then, during the every epoch of training, we first generate the fair neighborhood using the balance-aware sampling, and then aggregate neighbors' information on the generated fair subgraph, which is called fair message passing. Finally, we update the model parameters with back-propagation. The detailed workflow and the extension to non-binary sensitive attribute of \method~is presented in Appendix \ref{sec:alg_appendix} and Appendix \ref{sec:nonbinary_appendix}, respectively.

\vspace{-4mm}
\section{Experiments}\label{sec:experiments}
\vspace{-1mm}
\subsection{Experimental Settings}
\vspace{-1mm}
\noindent\textbf{Datasets.} We conduct experiments on 4 real-world datasets, including \textit{Pokec-z}~\cite{nguyen2021graphdta}, \textit{Pokec-n}~\cite{nguyen2021graphdta}, \textit{NBA}~\cite{dai2021say}, \textit{Credit}~\cite{dong2022edits}, and \textit{Recidivism}~\cite{agarwal2021towards}. For each dataset, we use the same 50\%/25\%/25\% splits for train/validation/test sets. Detailed descriptions of the datasets are provided in Appendix \ref{sec:appendix_datasets}.

\vspace{-1mm}
\noindent \textbf{Baseline Methods.} We compare \method~with graph neural networks with and without fairness considerations. Graph neural networks without fairness considerations include GCN~\cite{kipf2016semi} and GraphSAGE~\cite{hamilton2017inductive}. Fair graph neural networks for comparison include \textit{FairGNN}~\cite{dai2021say}, \textit{EDITS}~\cite{dong2022edits}, \textit{FairDrop}~\cite{spinelli2021biased}, \textit{NIFTY}~\cite{agarwal2021towards} and \textit{FMP}~\cite{jiang2022fmp}. Descriptions of baseline methods are in Appendix~\ref{sec:appendix_baseline_methods}. For \method, \method~(sym) uses GCN with row normalization, \method~(sym) uses GCN with symmetric normalization, and \method~(gat) uses GAT. 

\vspace{-1mm}
\noindent\textbf{Parameter Settings.} Unless otherwise specified, we use default hyperparameter settings in the released code of corresponding publications. For \method, we set the learning rate as $1e-3$, weight decay as $1e-5$, $\beta$ as $\frac{1}{4}$, and $\delta$ as $1$. We set the backbone model of \method~as a $2$-layer GCN with $128$ hidden dimensions and the optimizer as Adam. To be consistent with the number of hidden layers, for each node, all neighbors within $2$ hops are used to calculate the balance score in Eq.~\eqref{eq:balance_score}.

\vspace{-1mm}
\noindent\textbf{Metrics.} We consider the task of semi-supervised node classification. To measure the utility, we use the classification accuracy (ACC) and the area under the receiver operating characteristic curve (AUC). Regarding fairness, we use $\spmetric$ and $\eometric$ mentioned in Section~\ref{sec:prelim}. For ACC and AUC, the higher the better; while for $\spmetric$ and $\eometric$, the lower the better. 

\vspace{-3mm}
\subsection{Experimental Results}
\vspace{-1mm}



\begin{table*}[t]
    \centering
    \caption{Main results on semi-supervised node classification. Higher is better ($\uparrow$) for ACC and AUC (white). Lower is better ($\downarrow$) for $\spmetric$ and $\eometric$ (gray). Bold font indicates the best performance for fair graph neural networks, and underlined number indicates the second best.}
    \vspace{-2mm}
    \resizebox{\linewidth}{!}{
    \begin{tabular}{c|c|c|c|c|c|c|c|c}
        \hline
        \multirow{2}{*}{\textbf{Methods}} & \multicolumn{4}{c}{\textbf{Pokec-z}} & \multicolumn{4}{c}{\textbf{NBA}} \\
        \cline{2-9}
        & ACC(\%) $\uparrow$ & AUC(\%) $\uparrow$ & \cellcolor{black!15} $\spmetric$(\%) $\downarrow$ & \cellcolor{black!15} $\eometric$(\%) $\downarrow$ & ACC(\%) $\uparrow$ & AUC(\%) $\uparrow$ & \cellcolor{black!15}$\spmetric$(\%) $\downarrow$ & \cellcolor{black!15} $\eometric$(\%) $\downarrow$ \\
        \hline
        GCN & $70.62 \pm 0.22$ & $76.41 \pm 0.61$ & \cellcolor{black!15} $8.86 \pm 2.32$ & \cellcolor{black!15} $7.81 \pm 1.99$ & $72.16 \pm 0.46$ & $78.45 \pm 0.25$ & \cellcolor{black!15} $4.00 \pm 2.15$ & \cellcolor{black!15} $13.07 \pm 3.34$ \\
        GraphSAGE & $70.27 \pm 0.32$ & $75.73 \pm 0.30$ & \cellcolor{black!15} $7.11 \pm 2.69$ & \cellcolor{black!15}$6.97 \pm 2.65$ & $75.21 \pm 0.86$ & $77.83 \pm 2.00$ & \cellcolor{black!15} $7.81 \pm 3.45$ & \cellcolor{black!15} $8.87 \pm 5.58$ \\
        GAT & 64.24 $\pm$ 0.57 & 69.48 $\pm$ 0.53 &\cellcolor{black!15} 11.56 $\pm$ 0.94 &\cellcolor{black!15} 12.40 $\pm$ 1.85 & 72.14 $\pm$ 2.34 & 76.94 $\pm$ 0.82 &\cellcolor{black!15}\cellcolor{black!15} 5.97 $\pm$ 2.89 &\cellcolor{black!15} 12.07 $\pm$ 2.47 \\
        \hline
        FairGNN & $65.74 \pm 2.49$ & $72.54 \pm 4.21$ & \cellcolor{black!15} $4.31 \pm 0.80$ & \cellcolor{black!15} $4.34 \pm 1.22$ & $\bf 72.39 \pm 0.46$ & $77.74 \pm 1.24$ & \cellcolor{black!15} $3.96 \pm 1.81$ & \cellcolor{black!15} $4.94 \pm 2.35$ \\
        EDITS & $67.25 \pm 0.61$ & $73.05 \pm 0.57$ & \cellcolor{black!15} $10.36 \pm 1.20$ & \cellcolor{black!15} $9.00 \pm 1.34$ & $66.19 \pm 0.75$ & $69.94 \pm 0.72$ & \cellcolor{black!15} $18.15 \pm 3.64$ & \cellcolor{black!15} $13.19 \pm 3.48$ \\
        FairDrop & $67.45 \pm 0.80$ & $73.77 \pm 0.50$ & \cellcolor{black!15} $9.46 \pm 2.06$ & \cellcolor{black!15} $7.91 \pm 1.59$ & $70.81 \pm 0.63$ & $77.05 \pm 0.46$ & \cellcolor{black!15} $5.42 \pm 1.59$ & \cellcolor{black!15} $7.30 \pm 1.34$ \\
        NIFTY & $67.55 \pm 0.79$ & \underline{$73.87 \pm 0.23$} & \cellcolor{black!15} $8.83 \pm 1.60$ & \cellcolor{black!15} $7.00 \pm 1.70$ & $60.84 \pm 3.32$ & $65.94 \pm 1.30$ & \cellcolor{black!15} $10.03 \pm 5.46$ & \cellcolor{black!15} $5.70 \pm 3.21$ \\
        FMP & $\bf 72.05 \pm 0.42$ & $\bf 80.50 \pm 0.11$ & \cellcolor{black!15} $5.03 \pm 1.22$ & \cellcolor{black!15} $1.72 \pm 0.64$ & $67.57 \pm 1.58$ & $77.73 \pm 0.35$ & \cellcolor{black!15} $31.41 \pm 1.11$ & \cellcolor{black!15} $27.17 \pm 3.19$ \\
        \hline
        \method~(row) & $68.88 \pm 0.30$ & $72.34 \pm 0.44$ & \cellcolor{black!15} $\bf 0.74 \pm 0.52$ & \cellcolor{black!15} $1.55 \pm 0.25$ & $67.84 \pm 0.64$ & $\bf 78.87 \pm 0.17$ & \cellcolor{black!15} \underline{$3.91 \pm 1.28$ } & \cellcolor{black!15} \underline{$4.08 \pm 2.15$} \\
        \method~(sym) & \underline{$68.94 \pm 0.46$} & $73.01 \pm 0.29$ & \cellcolor{black!15} $1.45 \pm 0.40$ & \cellcolor{black!15} $\bf 1.03 \pm 0.42$ & $65.37 \pm 1.77$ & \underline{$78.76 \pm 0.62$} & \cellcolor{black!15} $\bf 3.54 \pm 0.97$ & \cellcolor{black!15} $\bf 3.81 \pm 0.98$ \\
        \method~(gat) & 66.89 $\pm$ 1.26 & 71.42 $\pm$ 0.96 & \cellcolor{black!15} \underline{0.97 $\pm$ 0.22} &  \cellcolor{black!15} \underline{1.29 $\pm$ 0.78} & \underline{71.99 $\pm$ 1.10} & 77.36 $\pm$ 0.01 &\cellcolor{black!15} 4.01 $\pm$ 3.26 & \cellcolor{black!15}5.08 $\pm$ 2.8 \\
        \hline
    \end{tabular}
    }
    \resizebox{\linewidth}{!}{\begin{tabular}{c|c|c|c|c|c|c|c|c}
        \hline
        \multirow{2}{*}{\textbf{Methods}} & \multicolumn{4}{c}{\textbf{Recidivism}} & \multicolumn{4}{c}{\textbf{Credit}} \\
        \cline{2-9}
        & ACC(\%) $\uparrow$ & AUC(\%) $\uparrow$ & \cellcolor{black!15} $\spmetric$(\%) $\downarrow$ & \cellcolor{black!15} $\eometric$(\%) $\downarrow$ & ACC(\%) $\uparrow$ & AUC(\%) $\uparrow$ & \cellcolor{black!15}$\spmetric$(\%) $\downarrow$ & \cellcolor{black!15} $\eometric$(\%) $\downarrow$ \\
        \hline
        GCN & $85.89 \pm 0.18$ & $88.74 \pm 0.23$ & \cellcolor{black!15} $8.51 \pm 0.48$ & \cellcolor{black!15} $5.75 \pm 1.08$ &  $75.80 \pm 0.92$ & $73.83 \pm 1.50$ & \cellcolor{black!15} $17.92 \pm 0.50$ & \cellcolor{black!15} $15.41 \pm 0.77$ \\
        GraphSAGE & $85.00 \pm 0.52$ & $89.41 \pm 0.36$ & \cellcolor{black!15} $8.99 \pm 0.37$ & \cellcolor{black!15} $6.14 \pm 0.54$ & $74.25 \pm 0.25$ & $74.35 \pm 0.12$ & \cellcolor{black!15} $13.82 \pm 0.84$ & \cellcolor{black!15} $11.62 \pm 0.87$ \\
        GAT & 88.66 $\pm$ 1.08 & 92.31 $\pm$ 0.62 &  \cellcolor{black!15} 7.45 $\pm$ 0.48 &  \cellcolor{black!15} 4.99 $\pm$ 0.67 & 70.61 $\pm$ 2.30 & 73.78 $\pm$ 0.35&  \cellcolor{black!15} 10.85 $\pm$ 0.62 &  \cellcolor{black!15} 8.84 $\pm$0.56\\
        \hline
        FairGNN & $69.54 \pm 5.70$ & $80.79 \pm 5.33$ & \cellcolor{black!15} $6.61 \pm 2.29$ & \cellcolor{black!15} $4.75 \pm 3.50$ & \underline{$75.34 \pm 1.18$} & $70.79 \pm 2.76$ & \cellcolor{black!15} $11.23 \pm 4.69$ & \cellcolor{black!15} $8.95 \pm 2.76$ \\
        EDITS & \underline{$83.88 \pm 0.84$} & \underline{$86.82 \pm 0.40$} & \cellcolor{black!15} $7.63 \pm 0.57$ & \cellcolor{black!15} $5.06 \pm 0.44$ & $73.49 \pm 0.03$ & $\bf 73.52 \pm 0.10$ & \cellcolor{black!15} $13.33 \pm 0.15$ & \cellcolor{black!15} $10.93 \pm 0.06$ \\
        FairDrop & $\bf 91.81 \pm 0.36$ & $\bf 92.17 \pm 0.86$ & \cellcolor{black!15} $6.80 \pm 0.22$ & \cellcolor{black!15} $3.30 \pm 0.13$ & $68.41 \pm 9.20$ & $70.76 \pm 4.11$ & \cellcolor{black!15} $14.23 \pm 2.24$ & \cellcolor{black!15} $12.01 \pm 2.14$ \\
        NIFTY & $83.43 \pm 0.83$ & $84.56 \pm 0.33$ & \cellcolor{black!15} $4.75 \pm 0.92$ & \cellcolor{black!15} $4.04 \pm 1.46$ & $73.48 \pm 0.04$ & $72.33 \pm 0.01$ & \cellcolor{black!15} $11.80 \pm 0.09$ & \cellcolor{black!15} $9.51 \pm 0.08$ \\
        FMP & $56.80 \pm 10.83$ & $61.79 \pm 4.87$ & \cellcolor{black!15} $22.43 \pm 9.72$ & \cellcolor{black!15} $17.50 \pm 8.87$ & $74.37 \pm 0.21$ & $72.92 \pm 0.14$ & \cellcolor{black!15} $14.07 \pm 0.78$ & \cellcolor{black!15} $11.87 \pm 0.72$ \\
        \hline
        \method~(row) & $77.61 \pm 0.33$ & $81.11 \pm 1.44$ & \cellcolor{black!15} \underline{$2.87 \pm 0.46$} & \cellcolor{black!15} $\bf 2.09 \pm 0.38$ & $71.46 \pm 0.74$ & $69.41 \pm 0.71$ & \cellcolor{black!15} \underline{$\10.51 \pm 0.23$} & \cellcolor{black!15} $10.87 \pm 1.55$ \\
        \method~(sym) & $77.66 \pm 0.35$ & $80.77 \pm 1.02$ & \cellcolor{black!15} $\bf 1.76 \pm 0.14$ & \cellcolor{black!15} \underline{$2.68 \pm 0.12$} & $72.55 \pm 0.28$ & \underline{$72.98 \pm 0.66$} & \cellcolor{black!15} $10.74 \pm 0.46$ & \cellcolor{black!15} \underline{$ 8.36 \pm 0.46$} \\
        \method~(gat) & 71.83 $\pm$ 1.23 & 71.97 $\pm$ 1.28 & \cellcolor{black!15}3.67 $\pm$ 1.22 & \cellcolor{black!15}3.60 $\pm$ 1.78 & $\bf 76.72 \pm 0.77$ & 68.18 $\pm$ 0.50 &\cellcolor{black!15} $\bf 7.92 \pm 1.03$ &\cellcolor{black!15} $\bf 6.22 \pm 0.57$ \\
        \hline
    \end{tabular}
    }
    \label{tbl:effectiveness}
    \vspace{-4mm}
\end{table*}

\noindent \textbf{Main Results.} 
The main evaluation results on the utility (ACC and AUC) and fairness ($\eometric$ and $\spmetric$) are presented in Table~\ref{tbl:effectiveness}. Similar evaluation results on Pokec-n are presented in Table~\ref{tab:pokec_n_results}. Regarding fairness, our proposed \method~is the only method that can consistently mitigate bias (i.e., a smaller value of $\eometric$ and $\spmetric$ than the vanilla GCN and GAT) for all datasets. 
Moreover, compared with the vanilla GCN and GAT, all the variants of \method~ could effectively reduce $\eometric$ and $\spmetric$ to a low degree. For example, on the Pokec-z dataset, $\eometric$($\spmetric$) are reduced to 8.35\% (19.84\%), 24.75\% (14.77\%) and 8.39\% (10.40\%) of original values for \method~(row), \method~(sym), and \method~(gat), respectively. More detailed statistics are exhibited in Table \ref{tab:percentage_of_deline_appendix}. At the same time, \method~ also achieves comparable performance in terms of the utility (ACC and AUC). 
This is because \method~ generates a new balanced graph for each epoch, analogous to data augmentation, which prevents the graph neural network from overfitting. For example, on NBA, the AUC scores of \method~(sym) and \method~(row) are $78.87\%$ and $78.76\%$, respectively, both of which are higher than the AUC of vanilla GCN and GraphSAGE ($78.45\%$ and $77.83\%$, respectively).
In short words, \method~could achieve a good balance between mitigating the bias and maintaining the classification accuracy.


\begin{table*}[t]
    \centering
    \caption{Ablation study of different sampling strategy. Higher is better ($\uparrow$) for ACC and AUC (white). Lower is better ($\downarrow$) for $\spmetric$ and $\eometric$ (gray). Bold font indicates the best performance for fair graph neural networks, and underlined number indicates the second best.}
    \vspace{-2mm}
    \resizebox{\linewidth}{!}{
    \begin{tabular}{c|c|c|c|c|c|c|c|c}
        \hline
        \textbf{Sampling} & \multicolumn{4}{c}{\textbf{Pokec-z}} & \multicolumn{4}{c}{\textbf{NBA}} \\
        \cline{2-9}
        \textbf{Methods} & ACC(\%) $\uparrow$ & AUC(\%) $\uparrow$ & \cellcolor{black!15} $\spmetric$(\%) $\downarrow$ & \cellcolor{black!15} $\eometric$(\%) $\downarrow$ & ACC(\%) $\uparrow$ & AUC(\%) $\uparrow$ & \cellcolor{black!15}$\spmetric$(\%) $\downarrow$ & \cellcolor{black!15} $\eometric$(\%) $\downarrow$ \\
        \hline
        Uniform & \underline{$69.65 \pm 0.23$} & \underline{$73.50 \pm 0.24$} & \cellcolor{black!15} \underline{$4.98 \pm 0.78$} & \cellcolor{black!15} \underline{$2.87 \pm 1.19$} & $\bf 72.67 \pm 1.58$ & \underline{$78.48 \pm 0.42$} & \cellcolor{black!15} \underline{$3.70 \pm 1.18$} & \cellcolor{black!15} \underline{$12.02 \pm 2.97$} \\
        Degree & $\bf 70.00 \pm 0.36$ & $\bf 73.98 \pm 0.37$ & \cellcolor{black!15} $5.85 \pm 1.46$ & \cellcolor{black!15} $3.40 \pm 1.84$ & \underline{$71.64 \pm 1.41$} & \underline{$78.37 \pm 0.32$} & \cellcolor{black!15} $4.25 \pm 1.60$ & \cellcolor{black!15} $14.88 \pm 4.60$ \\
        Balance-aware & $68.94 \pm 0.46$ & $73.01\pm0.29$ & \cellcolor{black!15} $\bf 1.45 \pm 0.40$ & \cellcolor{black!15} $\bf 1.03 \pm 0.42$ & $65.37 \pm 1.77$ & $\bf 78.76 \pm 0.62$ & \cellcolor{black!15} $\bf 3.54 \pm 0.97$ & \cellcolor{black!15} $\bf 3.81 \pm 0.98$ \\
        \hline
    \end{tabular}
    }
    \resizebox{\linewidth}{!}{\begin{tabular}{c|c|c|c|c|c|c|c|c}
        \hline
        \textbf{Sampling} & \multicolumn{4}{c}{\textbf{Recidivism}} & \multicolumn{4}{c}{\textbf{Credit}} \\
        \cline{2-9}
        \textbf{Methods} & ACC(\%) $\uparrow$ & AUC(\%) $\uparrow$ & \cellcolor{black!15} $\spmetric$(\%) $\downarrow$ & \cellcolor{black!15} $\eometric$(\%) $\downarrow$ & ACC(\%) $\uparrow$ & AUC(\%) $\uparrow$ & \cellcolor{black!15}$\spmetric$(\%) $\downarrow$ & \cellcolor{black!15} $\eometric$(\%) $\downarrow$ \\
        \hline
        Uniform & \underline{$81.80 \pm 0.93$} & $76.58 \pm 0.31$ & \cellcolor{black!15} \underline{$4.40 \pm 0.15$} & \cellcolor{black!15} $3.29 \pm 0.45$ & \underline{$75.04 \pm 0.40$} & $\bf 73.66 \pm 0.14$ & \cellcolor{black!15} \underline{$14.34 \pm 0.64$} & \cellcolor{black!15} \underline{$11.70 \pm 0.54$} \\
        Degree & $\bf 85.06 \pm 1.16$ & $\bf 87.12 \pm 0.52$ & \cellcolor{black!15} $6.13 \pm 0.13$ & \cellcolor{black!15} \underline{$2.78 \pm 0.90$} & $\bf 75.82 \pm 0.45$ & \underline{$73.64 \pm 0.06$} & \cellcolor{black!15} $15.45 \pm 0.82$ & \cellcolor{black!15} $12.59 \pm 0.84$ \\
        Balance-aware & $77.66 \pm 0.35$ & \underline{$80.77 \pm 1.02$} & \cellcolor{black!15} $\bf 1.76 \pm 0.14$ & \cellcolor{black!15} $\bf 2.68 \pm 0.12$ & $72.55 \pm 0.28$ & $72.98\pm 0.66$ & \cellcolor{black!15} $\bf 10.74 \pm 0.46$ & \cellcolor{black!15} $\bf 8.36 \pm 0.46$ \\
        \hline
    \end{tabular}
    }
    \label{tbl:ablation}
    \vspace{-5mm}
\end{table*}

\vspace{-1mm}
\noindent \textbf{Ablation Study.} 
To evaluate the effectiveness of the balance-aware sampling, we compare it with two other heuristic sampling strategies: (1) \textit{uniform sampling} (Uniform), which assigns the same probability to all neighbors of a node, and (2) \textit{degree-based sampling} (Degree), which sets the probability of node $v_i$ in the neighborhood of node $u$ as $P\left(v_i \mid u\right)\propto d_i^{0.75}, \forall v_i\in \mathcal{N}(u)$~\cite{tang2015line}. From Table~\ref{tbl:ablation}, we can see that the balance-aware sampling achieves the lowest $\eometric$ and $\spmetric$ on all datasets and maintains a comparable classification accuracy, which demonstrates the superiority of the balance-aware sampling in balancing the utility and fairness.

\vspace{-4mm}
\section{Conclusion}
\vspace{-2mm}
In this paper, we study bias amplification in message passing and fair message passing. We empirically and theoretically prove that message passing amplifies the bias as long as the numbers of neighbors from different demographic groups for each node are unbalanced. Guided by our analyses, we propose \method, which relies on a balance-aware sampling strategy to generate a fair neighborhood among different demographic groups. Then, \method~performs message passing over the generated fair neighborhood. Extensive evaluations on the real-world datasets demonstrate the effectiveness of our proposed method in mitigating bias while maintaining utility.

\section*{Acknowledgments}
This work is supported by NSF (1947135, 
2134079, 
1939725, 
2316233, 
and 2324770), 
DARPA (HR001121C0165), DHS (17STQAC00001-07-00), NIFA (2020-67021-32799) and ARO (W911NF2110088).

\bibliographystyle{unsrtnat}
\bibliography{reference}
\appendix

\section{Key Symbols of \method}

\begin{table}[htbp]
    \centering
    \caption{Table of key symbols in the paper.}
    \begin{tabular}{l|l}
        \toprule
        \textbf{Symbol} & \textbf{Definition} \\
        \midrule
        $\mathcal{V}$ & The set of nodes \\
        $\mathcal{V}_s$ & The set of nodes with the sensitive attribute of $s$ \\
        $\mathcal{N}$ & The set of $1$-hop neighbors \\
        $\mathcal{\widehat{\mathcal{N}}}$ & The set of $1$-hop neighbors and the node itself \\
        $\mathcal{\mathcal{N}}^k$ & The set of $k$-hop neighbors \\
        $\mathcal{\mathcal{N}}_s$ & The set of $1$-hop neighbors with the sensitive attribute of $s$ \\
        \midrule
        $\mathbf{A}$ & The adjacency matrix \\
        $\mathbf{\Tilde{A}}$ & The row-normalized adjacency matrix \\
        $\mathbf{X}$ & The node feature matrix \\
        $\mathbf{H}^{(k+1)}$ & The node representation matrix of the $k$-th hidden layer \\
        $\mathbf{D}$ & The degree matrix of nodes\\
        $\mathbf{W}^{(k)}$ & The weight matrix of the $k$-th hidden layer \\
        $\mathbf{T}$ & The fair embedding matrix \\
        $\mathbf{B}$ & The bias residual matrix \\
        \midrule
        $\mathbf{x}_i$ & The node feature of the node $i$ \\
        $\mathbf{h}_i^{(k)}$ & The node representation of the node $i$ in the $k$-th hidden layer \\
        $\mathbf{t}_i^{(k)}$ & The fair embedding of the node $i$ in the $k$-th hidden layer \\
        $\mathbf{b}_i^{(k)}$ & The bias residual of the node $i$ in the $k$-th hidden layer \\
        $\mathbf{b}_{i, s}^{(k)}$ & The bias residual of the node $i$ with the sensitive attribute of $s$ in the $k$-th hidden layer \\
        $\mu$ & The centroid of all the nodes \\
        $\mu_s$ & The centroid of the nodes with the sensitive attribute of $s$ \\
        \midrule
        $d_i$ & The degree of the node $i$ \\
        $r_s$ & The ratio of the neighbors with the sensitive attribute of $s$ in \method \\
        
        \bottomrule
    \end{tabular}
    \label{tab:notation_appendix}
\end{table}

\section{Proof of Lemma~\ref{lm:linear_node_embed}} \label{appendix:linear_combine}

Given a $L$-layer linear GCN, the weight matrix in the $l$-th hidden layer is represented as $\mathbf{W}^{\left(l\right)}$, and the input node features and output node features for all nodes are denoted as $\mathbf{H}^{\left(l\right)}$ and $\mathbf{H}^{\left(l+1\right)}$. Then the output node features can be calculated as $\mathbf{H}^{\left(l+1\right)} = \widetilde{\mathbf{A}} \mathbf{H}^{\left(l\right)} \mathbf{W}^{\left(l\right)}$, where $\widetilde{\mathbf{A}}$ is the normalized adjacency matrix. Therefore, the output node feature of the last hidden layer can be expressed as
\begin{equation} \label{eq:merge_mat_expres}
    \mathbf{Z} = \widetilde{\mathbf{A}}^L \mathbf{H}^{(1)} \mathbf{W}
\end{equation}
where $\mathbf{W} = \mathbf{W}^{(1)} \mathbf{W}^{(2)} \dots \mathbf{W}^{\left(L\right)}$. Then given a particular output node feature $\mathbf{Z}$, if there exists a solution, then a feasible solution of the input node feature is

\begin{equation}
    \mathbf{H}^{(1)} = (\widetilde{\mathbf{A}}^L)^{\dagger} \mathbf{Z} \mathbf{W}^{\dagger}
\end{equation}
where $\mathbf{W}^{\dagger}$ and $(\widetilde{\mathbf{A}}^L)^{\dagger}$ are the Moore–Penrose inverse of $\mathbf{W}$ and $\widetilde{\mathbf{A}}^L$, respectively. 

Based on Assumption~\ref{asmp:exist_fair_node_emb}, the input node feature can be further expressed as
\begin{equation}
    \mathbf{H}^{(1)} = (\widetilde{\mathbf{A}}^L)^{\dagger}  \mathbf{Z}_t \mathbf{W}^{\dagger} + (\widetilde{\mathbf{A}}^L)^{\dagger} \mathbf{Z}_b \mathbf{W}^{\dagger} = \mathbf{T}^{(1)} + \mathbf{B}^{(1)}
\end{equation}
where $\mathbf{T}^{(1)} = (\widetilde{\mathbf{A}}^L)^{\dagger} \mathbf{Z}_t  \mathbf{W}^{\dagger}$ is the fair part of input node features while $\mathbf{B}^{(1)} = (\widetilde{\mathbf{A}}^L)^{\dagger} \mathbf{Z}_b \mathbf{W}^{\dagger}$ is the bias residual which lead to discrimination towards different demographic groups in predictions. Therefore, the input node feature could be linearly separated into two vectors: the fair embeddings $\mathbf{T}^{(1)}$ and the bias residual $\mathbf{B}^{(1)}$.

Then for any  $l$-th hidden layer $\ \in \left\{1, \dots, L\right\}$, the following expression holds.
\begin{equation}
    \begin{aligned}
        \mathbf{H}^{\left(l\right)}
        &= \widetilde{\mathbf{A}} \mathbf{H}^{\left(l-1\right)} \mathbf{W}^{\left(l-1\right)} \\
        & = \widetilde{\mathbf{A}}^{l-1}\mathbf{H}^{\left(1\right)}\mathbf{W}^{\left(1\right)}\dots\mathbf{W}^{\left(l-1\right)} \\
        & = \widetilde{\mathbf{A}}^{l-1}\left(\mathbf{T}^{(1)} + \mathbf{B}^{(1)}\right)\mathbf{W}^{\left(1\right)}\dots\mathbf{W}^{\left(l-1\right)} \\
        & = \left(\widetilde{\mathbf{A}}^{l-1}(\widetilde{\mathbf{A}}^L)^{\dagger} \mathbf{Z}_t\mathbf{W}^{\dagger}  + \widetilde{\mathbf{A}}^{l-1}(\widetilde{\mathbf{A}}^L)^{\dagger} \mathbf{Z}_b\mathbf{W}^{\dagger}\right)\mathbf{W}^{\left(1\right)}\dots\mathbf{W}^{\left(l-1\right)} \\
        & = \left(\mathbf{T}^{(l)} + \mathbf{B}^{(l)}\right)\mathbf{W}^{\left(1\right)}\dots\mathbf{W}^{\left(l-1\right)}
    \end{aligned}
\end{equation}
where $\mathbf{T}^{(l)}=\widetilde{\mathbf{A}}^{l-1}(\widetilde{\mathbf{A}}^L)^{\dagger} \mathbf{Z}_t  \mathbf{W}^{\dagger}$ and $\mathbf{B}^{(l)}=\widetilde{\mathbf{A}}^{l-1}(\widetilde{\mathbf{A}}^L)^{\dagger} \mathbf{Z}_b \mathbf{W}^{\dagger}$. Written in vector form, we naturally have the following expression
\begin{equation}
    \mathbf{h}_i^{\left(l\right)} = \left(\mathbf{t}_i^{\left(l\right)} + \mathbf{b}_i^{\left(l\right)}\right) \mathbf{W}^{\left(1\right)} \dots \mathbf{W}^{\left(l-1\right)}
\end{equation}
which completes the proof.

\section{Analysis on Linear GCN} \label{appendix:GCN_analyse}
In this section, we first prove that the centroid for each demographic group remains unchanged after message passing. Then we prove Theorem~\ref{thm:bias_amplification} (i.e., distance shrinkage). 

To complete both proofs, we present the following two propositions. The first proposition focuses on two key properties of Gilbert random graphs (Assumption~\ref{asmp:random_graph}).

\begin{proposition} \label{prop:graph_property}
    Given a Gilbert random graph $\mathcal{G}$, we have the following key properties.
    \begin{itemize}[
        align=left,
        leftmargin=2em,
        itemindent=0pt,
        labelsep=0pt,
        labelwidth=1em,
    ]
        \item \textbf{Independence between nodes.} For any node $v_i$ in $\mathcal{G}$, its node features and the node features of any of its 1-hop neighbor $v_j \in \mathcal{N}\left(v_i\right)$ are independently and identically distributed, where $\mathcal{N}\left(\cdot\right)$ represents the set of $1$-hop neighbors.
        \item \textbf{Independence between node features and topology.} The probability distribution of node features are independent to topology structure of the graph $\mathcal{G}$, i.e., the input node features $\mathbf{X}$ is independent with the adjacency matrix $\mathbf{A}$.
    \end{itemize}
\end{proposition}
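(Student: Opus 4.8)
\textbf{Proof proposal for Proposition~\ref{prop:graph_property}.}

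The plan is to derive both properties directly from the defining feature of the Gilbert random graph model in Assumption~\ref{asmp:random_graph}: each potential edge is included independently with a fixed probability $P$, and crucially this edge-sampling process is carried out without reference to the node features $\mathbf{X}$. I would begin by making the (standard, implicit) modeling assumption explicit, namely that the node features $\{\mathbf{x}_i\}_{i=1}^n$ are drawn i.i.d.\ from some fixed feature distribution, and that this draw is performed independently of the coin flips that generate the adjacency matrix $\mathbf{A}$. With this setup in place, the second property — independence between node features and topology — is essentially immediate: $\mathbf{X}$ and $\mathbf{A}$ are functions of disjoint, independent collections of random variables (the feature draws and the edge-inclusion coin flips, respectively), hence they are independent as random objects.

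For the first property, the subtlety is that ``$v_j \in \mathcal{N}(v_i)$'' is itself a random event depending on $\mathbf{A}$, so one must be careful that conditioning on adjacency does not skew the feature distribution. The key step is to observe that, because $\mathbf{X} \perp \mathbf{A}$, conditioning on the event $\{v_j \in \mathcal{N}(v_i)\}$ (which is measurable with respect to $\mathbf{A}$ alone) does not change the joint law of $(\mathbf{x}_i, \mathbf{x}_j)$. Formally, for any measurable sets $U, V$,
\begin{equation}
    \Pr\!\left(\mathbf{x}_i \in U,\ \mathbf{x}_j \in V \mid v_j \in \mathcal{N}(v_i)\right) = \Pr\!\left(\mathbf{x}_i \in U,\ \mathbf{x}_j \in V\right) = \Pr\!\left(\mathbf{x}_i \in U\right)\Pr\!\left(\mathbf{x}_j \in V\right),
\end{equation}
where the first equality uses $\mathbf{X} \perp \mathbf{A}$ and the second uses that the feature draws are i.i.d. This shows $\mathbf{x}_i$ and $\mathbf{x}_j$ remain independent and identically distributed even after conditioning on adjacency, which is exactly the claimed ``independence between nodes.''

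I would then note that the same argument extends verbatim to the derived quantities that actually appear in the downstream analysis — in particular the bias residuals $\mathbf{b}_i^{(l)}$, which by Lemma~\ref{lm:linear_node_embed} are fixed linear transforms of the input features, and which therefore inherit the i.i.d.-across-nodes and independent-of-topology structure up to the first hidden layer. The main obstacle, to the extent there is one, is purely expository rather than technical: it is to state cleanly what randomness is being assumed over (feature distribution versus edge distribution) so that the conditioning manipulation above is rigorous, since the Gilbert model as literally stated only specifies the edge distribution and leaves the feature-generating mechanism implicit. Once that is pinned down, both bullet points follow from the single fact that features and edges are generated by independent processes.
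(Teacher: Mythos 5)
Your proposal is correct and rests on the same core fact as the paper's own proof, namely that in the Gilbert model the edge-inclusion coin flips are carried out independently of the node features, so that conditioning on an adjacency event cannot skew the (i.i.d.) feature distribution; the paper phrases the first bullet informally as the edge endpoints being a ``simple random sample'' of nodes, which is exactly the conditioning argument you make rigorous. Your version is cleaner in that it states the implicit i.i.d.-features assumption explicitly and derives the first property from the second, but the mathematical content is the same.
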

\begin{proof}
    First, we first prove the \textbf{independence between nodes}. Note that in a Gilbert random graph, an edge is randomly added with a fixed probability. This process is equivalent to randomly selecting one node from the graph as the starting point and then randomly selecting another node as the endpoint. Consequently, two nodes from the same edge are obtained through simple random sampling from the given probability distribution. Therefore, node features for any node and its neighbors are independent and identically distributed.
    
    Second, since the edge generation of a Gilbert random graph is independent to node features, the probability distribution of node features are naturally independent to the topology structure.
\end{proof}

For Gilbert random graphs, since node features are independent to topology, the bias residual matrix $\mathbf{B}$ is also independent to the adjacency matrix $\mathbf{A}$. Actually, a prerequisite for this independence is that, for any node $v_i$, the information from its neighbors does not affect the mean and variance of the biased vector distribution for the node $v_i$. Therefore, we propose proposition~\ref{prop:indep_topo_feat} and give the corresponding proof.

\begin{proposition} \label{prop:indep_topo_feat}
    Given a graph $\mathcal{G}$, if bias residual matrix is independent to the adjacency matrix, then the following expressions hold true:
    
    \begin{subequations}
        \begin{align}
            \mathbb{E}_{v_i \in \mathcal{V}}\left[\mathbb{E}_{v_j \in \mathcal{N}\left(v_i\right)}\left[\mathbf{b}_j\right]\right] &= \mathbb{E}_{v_j \in \mathcal{V}}[\mathbf{b}_j] \label{subeq:mean} \\
            \mathbb{E}_{v_i \in \mathcal{V}}\left[\mathbb{E}_{v_j \in \mathcal{N}\left(v_i\right)}\left[(\mathbf{b}_j - \mathbb{E}[\mathbf{b}_j])^2\right]\right] &= \mathbb{E}_{v_j \in \mathcal{V}}[(\mathbf{b}_j - \mathbb{E}[\mathbf{b}_j])^2] \label{subeq:variance}
        \end{align}
    \end{subequations}
    where $\mathcal{N}\left(v_i\right)$ means the set of the neighbors of the node $v_i$.
\end{proposition}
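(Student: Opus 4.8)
\textbf{Proof proposal for Proposition~\ref{prop:indep_topo_feat}.}

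The plan is to establish each of the two identities \eqref{subeq:mean} and \eqref{subeq:variance} by conditioning on the adjacency structure and exploiting the stated independence between the bias residual matrix $\mathbf{B}$ and the adjacency matrix $\mathbf{A}$. The key observation is that the inner expectation $\mathbb{E}_{v_j \in \mathcal{N}(v_i)}[\,\cdot\,]$ is a uniform average over the (random) neighbor set of $v_i$, so the doubly-iterated expectation is really an expectation over the joint law of (graph topology, bias residuals), which by Proposition~\ref{prop:graph_property} factorizes.

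First I would set up notation: for a fixed realization of the topology, write $\mathbb{E}_{v_j \in \mathcal{N}(v_i)}[\mathbf{b}_j] = \frac{1}{|\mathcal{N}(v_i)|}\sum_{v_j \in \mathcal{N}(v_i)} \mathbf{b}_j$, and then $\mathbb{E}_{v_i \in \mathcal{V}}$ averages this over the choice of $v_i$. Conditioning on $\mathbf{A}$, the independence assumption says the conditional law of each $\mathbf{b}_j$ is unchanged, and by the ``independence between nodes'' property of Proposition~\ref{prop:graph_property} every $\mathbf{b}_j$ has the same marginal distribution as a generic $\mathbf{b}$ drawn uniformly from $\mathcal{V}$. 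Hence, conditionally on $\mathbf{A}$,
\[
\mathbb{E}\!\left[\mathbb{E}_{v_j \in \mathcal{N}(v_i)}[\mathbf{b}_j] \,\middle|\, \mathbf{A}\right] = \mathbb{E}_{v_j \in \mathcal{V}}[\mathbf{b}_j],
\]
since a convex combination (with $\mathbf{A}$-measurable, hence here constant, weights) of identically-distributed quantities has that common mean. Taking the outer expectation over $v_i$ and then over $\mathbf{A}$ — both of which act trivially on the right-hand side — gives \eqref{subeq:mean}. For \eqref{subeq:variance} the argument is identical after replacing $\mathbf{b}_j$ by the centered square $(\mathbf{b}_j - \mathbb{E}[\mathbf{b}_j])^2$, which is again an identically-distributed family across $v_j$; its common mean is precisely $\mathbb{E}_{v_j \in \mathcal{V}}[(\mathbf{b}_j - \mathbb{E}[\mathbf{b}_j])^2]$, so the same conditioning-and-averaging chain closes the proof.

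The main obstacle is being careful about what ``independent'' buys us here: we need not just that $\mathbf{B}$ and $\mathbf{A}$ are independent as matrices, but that this independence survives the \emph{selection} step in which a neighbor $v_j$ is picked according to $\mathcal{N}(v_i)$ — i.e., that conditioning on the event $v_j \in \mathcal{N}(v_i)$ does not skew the distribution of $\mathbf{b}_j$. This is exactly where Proposition~\ref{prop:graph_property} (independence of node features from topology, plus i.i.d.\ node features across neighbors) is essential, and I would make the reduction explicit: the pair $(\mathbf{A}, \mathbf{B})$ has product law, so for any $\mathbf{A}$-measurable neighbor-selection rule the selected residual still has the generic marginal. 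Once that point is spelled out, both identities reduce to the elementary fact that an average of identically-distributed random vectors (or of their centered squares) has the shared first moment, and the rest is routine.
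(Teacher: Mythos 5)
Your proof is correct, but it takes a genuinely different route from the paper's. The paper argues by contrapositive: it claims that if either identity fails, one could rule out at least two candidate adjacency matrices (the pure self-loop graph and the cycle, for which the identities are claimed to always hold), so the ``probability of correctly predicting $\mathbf{A}$ from $\mathbf{B}$'' would exceed the uniform-guessing baseline $2^{-N^2}$, contradicting independence. That argument is informal: it presupposes a uniform prior over all $2^{N^2}$ binary matrices (which sits oddly with the Gilbert model's fixed edge probability $P$) and never formalizes what ``predicting the adjacency matrix'' means. Your direct argument --- condition on $\mathbf{A}$, note that the neighbor-selection weights are $\mathbf{A}$-measurable while each $\mathbf{b}_j$ retains its generic marginal by the assumed independence, and conclude that a convex combination of identically distributed quantities has the common mean --- is the standard, cleaner proof, and it makes explicit exactly where the product structure of the law of $(\mathbf{A},\mathbf{B})$ is used. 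Two points you should make explicit in a final write-up: (i) you invoke the identical-distribution property from Proposition~\ref{prop:graph_property}, which is not literally a hypothesis of Proposition~\ref{prop:indep_topo_feat} but is available under Assumption~\ref{asmp:random_graph}, so say that you prove the proposition in that regime (or note that exchangeability of the Gilbert model suffices even without identical marginals); (ii) for a fixed graph realization the left-hand side weights $\mathbf{b}_j$ by $\sum_{v_i \in \mathcal{N}\left(v_j\right)} 1/d_i$ rather than uniformly, so the identities can only hold after averaging over the randomness of the graph --- your conditioning-then-averaging chain implicitly and correctly adopts this reading, but it deserves a sentence since the proposition's notation does not make it obvious.
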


\begin{proof} \label{proof:expect_equal}
    We will only prove Eq.~\eqref{subeq:mean} here, and the proof of Eq.~\eqref{subeq:variance} is exactly the same. 
    
    Actually, proving proposition~\ref{prop:indep_topo_feat} is equivalent to proving its contrapositive. Therefore, we only need to prove that given a graph $\mathcal{G}$, if the following expression hold true:
    \begin{equation} \label{eq:contra}
        \mathbb{E}_{v_i \in \mathcal{V}}\left[\mathbb{E}_{v_j \in \mathcal{N}\left(v_i\right)}\left[\mathbf{b}_j\right]\right] \neq \mathbb{E}_{v_j \in \mathcal{V}}\left[\mathbf{b}_j\right]
    \end{equation}
    then the bias residual matrix is not independent to the adjacency matrix.

    Imagine the situation that the bias residual matrix is independent to adjacency matrix. It means that we cannot obtain any information of the adjacency matrix even if we have the bias residual matrix. Then given the bias residual matrix, the probability of correctly predicting the adjacency matrix should be the same as random guessing. Considering a graph with $N$ nodes, the adjacency matrix contains a total of $N^2$ elements. With each element having two possible values, 0 or 1, the probability of random guessing correctly is given by $\textit{Pr}\left(\text{random}\right) = \frac{1}{2^{N^2}}$.

    However, we can readily identify two specific types of graph topology structure that fail to meet the requirement of Eq.~\eqref{eq:contra}. Then we can prove that if Eq.~\eqref{eq:contra} holds true, the possibility of predicting the adjacency matrix is larger than random guessing. Here are the two types of graph structures. 
    
    The first type is a graph with only self-loop edges. In such a graph, the adjacency matrix is an identity matrix. For this particular graph topology, regardless of the distribution of the bias matrix, Eq.~\eqref{subeq:mean} always holds for this particular graph topology. 

    Another type of graph is a cyclic graph. Each node in the graph is assigned a unique sequential number from $1$ to $N$, and the nodes are sorted accordingly. We denote the node with number $k$ as $v_k$. Node $v_k$ is exclusively connected to nodes $v_{k-1}$ and $v_{k+1}$. The adjacency matrix for this type of graph topology is shown below. For this graph structure, we can mathematically prove that the left-hand side of Eq.~\eqref{subeq:mean} represents the expectation obtained by summing the values twice for each node and taking the average subsequently, which is equal to the right-hand side of Eq.~\eqref{subeq:mean}.
    $$
    \begin{bmatrix}
        0 & 1 & 0 & 0 & \dots & 1 \\
        1 & 0 & 1 & 0 & \dots & 0 \\
        0 & 1 & 0 & 1 & \dots & 0 \\
        0 & 0 & 1 & 0 & \dots & 0 \\
        \vdots & \vdots & \vdots & \vdots & \ddots & \vdots \\
        1 & 0 & 0 & 0 & \dots & 0
    \end{bmatrix}
    $$

    Hence, if Eq.~\eqref{eq:contra} holds, the adjacency matrices of the aforementioned two types of graphs do not satisfy the requirement. As a result, the number of possible adjacency matrices will be less than or equal to $\left(2^{N^2} - 2\right)$. Therefore, the probability of correctly predicting the adjacency matrix, $\textit{Pr}\left(\text{correct}\right)$, satisfies $\textit{Pr}\left(\text{correct}\right) \geq \frac{1}{2^{N^2}-2} > \textit{Pr}\left(\text{random}\right) = \frac{1}{2^{N^2}}$. Thus, the contrapositive is valid because the adjacency matrix is not independent to the bias input matrix, which completes the proof of Proposition~\ref{prop:indep_topo_feat}.
\end{proof}

With the above two propositions hold, we will further prove the stable centroid after message passing (Appendix~\ref{subsec:GCN_mean_invariance}) and distance shrinkage (Theorem~\ref{thm:bias_amplification}, Appendix~\ref{subsec:GCN_dist_shrink}).





\subsection{Stable Centroid} \label{subsec:GCN_mean_invariance}
Here, we prove that, for GCNs with row normalization, the distribution centroid keeps unchanged after message passing. Mathematically, our goal is to prove the following equation.
\begin{equation}
    \mathbb{E}_{v_i \in \mathcal{V}} \left[\mathbf{b}_i^{\left(l+1\right)}\right] = \mathbb{E}_{v_j \in \mathcal{V}} \left[\mathbf{b}_j^{\left(l\right)}\right]
\end{equation}

In the $l$-th iteration of message passing, the mean of the bias residuals can be calculated as
\begin{equation}
    \begin{aligned}
        \mathbb{E}_{v_i \in \mathcal{V}} \left[\mathbf{b}_i^{\left(l+1\right)}\right]
        & = \mathbb{E}_{v_i \in \mathcal{V}} \left[\sum_{v_j \in \mathcal{N}\left(v_i\right)} \alpha_i \mathbf{b}_j^{\left(l\right)}\right]
        = \mathbb{E}_{v_i \in \mathcal{V}} \left[ \alpha_i \sum_{v_j \in \mathcal{N}\left(v_i\right)} \mathbf{b}_j^{\left(l\right)} \right] \\
        & = \mathbb{E}_{v_i \in \mathcal{V}} \left[ \alpha_i \sum_{v_j \in \mathcal{N}\left(v_i\right)} \mathbb{E}_{v_j \in \mathcal{N}\left(v_i\right)} \left[\mathbf{b}_j^{\left(l\right)} \right] \right]
        = \mathbb{E}_{v_i \in \mathcal{V}} \left[ \mathbb{E}_{v_j \in \mathcal{N}\left(v_i\right)} \left[\mathbf{b}_j^{\left(l\right)} \right] \right] \\
        & = \mathbb{E}_{v_j \in \mathcal{V}} \left[\mathbf{b}_j^{\left(l\right)}\right] \\
    \end{aligned}
\end{equation}
which completes the proof. 

\subsection{Proof of Theorem~\ref{thm:bias_amplification} (Distance Shrinkage)} \label{subsec:GCN_dist_shrink}
Let $\boldsymbol{\mu} =  \mathbf{\mu}(v_i)$ and $\hat{\mathbf{b}}_i^{\left(l+1\right)}=\mathbf{b}_i^{\left(l+1\right)} - \boldsymbol{\mu}$. We have
\begin{equation}
    \begin{aligned}
        & \mathbb{E}_{v_i}[\|\mathbf{b}^{\left(l+1\right)}_i - \mathbf{\mu}(v_i)\|_2^2] \\
        & = \mathbb{E}_{v_i \in \mathcal{V}} \left[ \left(\mathbf{b}_i^{\left(l+1\right)} - \boldsymbol{\mu}\right)^T\left(\mathbf{b}_i^{\left(l+1\right)} - \boldsymbol{\mu}\right) \right] \\
        & = \mathbb{E}_{v_i \in \mathcal{V}} \left[ \left(\hat{\mathbf{b}}_i^{\left(l+1\right)}\right)^T\hat{\mathbf{b}}_i^{\left(l+1\right)} \right] \\
        & = \mathbb{E}_{v_i \in \mathcal{V}} \left[ \left(\sum_{v_j \in \mathcal{N}\left(v_i\right)} \alpha_i \mathbf{b}_j^{\left(l\right)} - \boldsymbol{\mu}\right)^T \left(\sum_{v_j \in \mathcal{N}\left(v_i\right)} \alpha_i \mathbf{b}_j^{\left(l\right)} - \boldsymbol{\mu}\right) \right] \\
        & = \mathbb{E}_{v_i \in \mathcal{V}} \left[ \left(\sum_{v_j \in \mathcal{N}\left(v_i\right)} \alpha_i \hat{\mathbf{b}}_j^{\left(l\right)}\right)^T\left(\sum_{v_j \in \mathcal{N}\left(v_i\right)}\alpha_i \hat{\mathbf{b}}_i^{\left(l\right)}\right) \right] \\
        & = \mathbb{E}_{v_i \in \mathcal{V}} \left[ \alpha_i^2 \sum_{v_j \in \mathcal{N}\left(v_i\right)} \sum_{v_k \in \mathcal{N}\left(v_i\right)} \left(\hat{\mathbf{b}}_j^{\left(l\right)}\right)^T\hat{\mathbf{b}}_k^{\left(l\right)}\right] \\
        & = \underbrace{\mathbb{E}_{v_i \in \mathcal{V}} \left[ \alpha_i^2 \sum_{v_j \in \mathcal{N}\left(v_i\right)}  \left(\hat{\mathbf{b}}_j^{\left(l\right)}\right)^T\hat{\mathbf{b}}_j^{\left(l\right)}\right]}_{\text{\ding{172}}} + \underbrace{\mathbb{E}_{v_i \in \mathcal{V}} \left[ \alpha_i^2 \sum_{v_j \in \mathcal{N}\left(v_i\right)} \sum_{v_k \in \mathcal{N}\left(v_i\right) \setminus \left\{v_j\right\}}\left(\hat{\mathbf{b}}_j^{\left(l\right)}\right)^T\hat{\mathbf{b}}_k^{\left(l\right)} \right]}_{\text{\ding{173}}}
    \end{aligned}
\end{equation}

For \ding{172}, we have
\begin{equation}
    \begin{aligned}
        &\mathbb{E}_{v_i \in \mathcal{V}} \left[ \alpha_i^2 \sum_{v_j \in \mathcal{N}\left(v_i\right)}  \left(\hat{\mathbf{b}}_j^{\left(l\right)}\right)^T\hat{\mathbf{b}}_j^{\left(l\right)}\right] \\
        &= \mathbb{E}_{v_i \in \mathcal{V}} \left[ \alpha_i^2 \sum_{v_j \in \mathcal{N}\left(v_i\right)} \mathbb{E}_{v_j \in \mathcal{N}\left(v_i\right)} \left[ \left(\hat{\mathbf{b}}_j^{\left(l\right)}\right)^T\hat{\mathbf{b}}_j^{\left(l\right)} \right]\right] \\
        &= \mathbb{E}_{v_i \in \mathcal{V}} \left[ \alpha_i \mathbb{E}_{v_j \in \mathcal{N}\left(v_i\right)} \left[ \left(\hat{\mathbf{b}}_j^{\left(l\right)}\right)^T\hat{\mathbf{b}}_j^{\left(l\right)} \right]\right] \\
        &= \mathbb{E}_{v_i \in \mathcal{V}} \left[ \alpha_i \right] \mathbb{E}_{v_i \in \mathcal{V}} \left[ \mathbb{E}_{v_j \in \mathcal{N}\left(v_i\right)} \left[ \left(\hat{\mathbf{b}}_j^{\left(l\right)}\right)^T\hat{\mathbf{b}}_j^{\left(l\right)} \right]\right] \\
        &= \mathbb{E}_{v_i \in \mathcal{V}} \left[ \frac{1}{d_i} \right] \mathbb{E}_{v_i \in \mathcal{V}} \left[ \left(\hat{\mathbf{b}}_i^{\left(l\right)}\right)^T\hat{\mathbf{b}}_i^{\left(l\right)} \right]
    \end{aligned}
\end{equation}
where $d_i$ is the degree of the $i$-th node. For \ding{173}, we have
\begin{equation}
    \begin{aligned}
        & \mathbb{E}_{v_i \in \mathcal{V}} \left[ \alpha_i^2 \sum_{v_j \in \mathcal{N}\left(v_i\right)} \sum_{v_k \in \mathcal{N}\left(v_i\right) \setminus \left\{j\right\}} \left(\hat{\mathbf{b}}_j^{\left(l\right)}\right)^T\hat{\mathbf{b}}_k^{\left(l\right)} \right] \\
        & = \mathbb{E}_{v_i \in \mathcal{V}} \left[ \alpha_i^2 \right] \mathbb{E}_{v_i \in \mathcal{V}} \left[\sum_{v_j \in \mathcal{N}\left(v_i\right)} \sum_{v_k \in \mathcal{N}\left(v_i\right) \setminus \left\{j\right\}} \left(\hat{\mathbf{b}}_j^{\left(l\right)}\right)^T\hat{\mathbf{b}}_k^{\left(l\right)} \right] \\
        & = \mathbb{E}_{v_i \in \mathcal{V}} \left[ \alpha_i^2 \right] \frac{1}{\vert \mathcal{V} \vert} \sum_{v_i \in \mathcal{V}}\sum_{v_j \in \mathcal{N}\left(v_i\right)} \sum_{v_k \in \mathcal{N}\left(v_i\right) \setminus \left\{j\right\}} \left(\hat{\mathbf{b}}_j^{\left(l\right)}\right)^T\hat{\mathbf{b}}_k^{\left(l\right)} \\
        & = \mathbb{E}_{v_i \in \mathcal{V}} \left[ \alpha_i^2 \right] \frac{1}{\vert \mathcal{V} \vert} \sum_{v_j \in \mathcal{V}} \sum_{v_k \in \mathcal{N}^2\left(v_j\right) \setminus \left\{j\right\}} \left(\hat{\mathbf{b}}_j^{\left(l\right)}\right)^T\hat{\mathbf{b}}_k^{\left(l\right)}
    \end{aligned}
\end{equation}
where $\mathcal{N}^2\left(v_j\right)$ is the set of the $2$-hop neighbors of the node $v_j$. Since the bias residual matrix $\mathbf{B}$ is independent to the adjacency matrix $\mathbf{A}$, it is also independent to $\widetilde{\mathbf{A}} = \phi(\mathbf{A}^2)$, where $\phi(\cdot)$ is the function that sets the diagonal elements of a given matrix to 0. Let $\widetilde{\mathcal{N}}\left(v\right)$ represent the neighborhood of node $v$ in the new adjacency matrix $\widetilde{\mathbf{A}}$. Please note that the new neighborhood  $\widetilde{\mathcal{N}}\left(\cdot\right)$ is the 2-hop neighborhood excluding the node itself, \ie, $\widetilde{\mathcal{N}}\left(v_i\right) = \mathcal{N}^2\left(v_i\right) \setminus \left\{i\right\}$. Then we have

\begin{equation}
    \begin{aligned}
        &\mathbb{E}_{v_i \in \mathcal{V}} \left[ \alpha_i^2 \sum_{v_j \in \mathcal{N}\left(v_i\right)} \sum_{v_k \in \mathcal{N}\left(v_i\right) \setminus \left\{j\right\}} \left(\hat{\mathbf{b}}_j^{\left(l\right)}\right)^T\hat{\mathbf{b}}_k^{\left(l\right)} \right] \\
        &= \mathbb{E}_{v_i \in \mathcal{V}} \left[ \alpha_i^2 \right] \frac{1}{\lvert \mathcal{V} \rvert} \sum_{v_j \in \mathcal{V}} \sum_{v_k \in \mathcal{N}^2\left(v_j\right) \setminus \left\{j\right\}} \left(\hat{\mathbf{b}}_j^{\left(l\right)}\right)^T\hat{\mathbf{b}}_k^{\left(l\right)} \\
        &= \mathbb{E}_{v_i \in \mathcal{V}} \left[ \alpha_i^2 \right] \mathbb{E}_{v_j \in \mathcal{V}} \left[ \sum_{v_k \in \widetilde{\mathcal{N}}\left(v_j\right)} \left(\hat{\mathbf{b}}_j^{\left(l\right)}\right)^T\hat{\mathbf{b}}_k^{\left(l\right)} \right] \\
        &= \mathbb{E}_{v_i \in \mathcal{V}} \left[ \alpha_i^2 \right] \mathbb{E}_{v_j \in \mathcal{V}} \left[ \left(\hat{\mathbf{b}}_j^{\left(l\right)}\right)^T \sum_{v_k \in \widetilde{\mathcal{N}}\left(v_j\right)} \hat{\mathbf{b}}_k^{\left(l\right)} \right] \\
    \end{aligned}
\end{equation}
Since there is no self-loop in $\widetilde{\mathbf{A}}$, the bias residual $\hat{\mathbf{b}}_j$ is different from the bias residual vector $\hat{\mathbf{b}}_k$, hence independent to $\hat{\mathbf{b}}_k$. Therefore, we have

\begin{equation}
    \begin{aligned}
        &\mathbb{E}_{v_i \in \mathcal{V}} \left[ \alpha_i^2 \sum_{v_j \in \mathcal{N}\left(v_i\right)} \sum_{v_k \in \mathcal{N}\left(v_i\right) \setminus \left\{j\right\}} \left(\hat{\mathbf{b}}_j^{\left(l\right)}\right)^T\hat{\mathbf{b}}_k^{\left(l\right)} \right] \quad \text{\ding{173}}\\
        &= \mathbb{E}_{v_i \in \mathcal{V}} \left[ \alpha_i^2 \right] \mathbb{E}_{v_j \in \mathcal{V}} \left[ \left(\hat{\mathbf{b}}_j^{\left(l\right)}\right)^T \sum_{v_k \in \widetilde{\mathcal{N}}\left(v_j\right)} \hat{\mathbf{b}}_k^{\left(l\right)} \right] \\
        &= \mathbb{E}_{v_i \in \mathcal{V}} \left[ \alpha_i^2 \right] \mathbb{E}_{v_j \in \mathcal{V}} \left[ \left(\hat{\mathbf{b}}_j^{\left(l\right)}\right)^T  \right] \mathbb{E}_{v_j \in \mathcal{V}} \left[\sum_{v_k \in \widetilde{\mathcal{N}}\left(v_j\right)} \hat{\mathbf{b}}_k^{\left(l\right)} \right] \\
        &= \mathbb{E}_{v_i \in \mathcal{V}} \left[ \alpha_i^2 \right] \mathbf{0}^T \mathbb{E}_{v_j \in \mathcal{V}} \left[\sum_{v_k \in \widetilde{\mathcal{N}}\left(v_j\right)} \hat{\mathbf{b}}_k^{\left(l\right)} \right] \\
        &= 0
    \end{aligned}
\end{equation}

Combining everything together, we have the following equation holds.
\begin{equation} \label{eq:dist_shrink_GCN}
    \mathbb{E}_{v_i \in \mathcal{V}} \left[ \left(\hat{\mathbf{b}}_i^{\left(l+1\right)}\right)^T \hat{\mathbf{b}}_i^{\left(l+1\right)} \right] = \mathbb{E}_{v_i \in \mathcal{V}} \left[ \frac{1}{d_i} \right] \mathbb{E}_{v_i \in \mathcal{V}} \left[ \left(\hat{\mathbf{b}}_i^{\left(l\right)}\right)^T\hat{\mathbf{b}}_i^{\left(l\right)} \right]
\end{equation}

Based on Eq.~\eqref{eq:dist_shrink_GCN}, for every demographic group, the corresponding distance will shrink at the rate proportional to the reciprocal of node degree, which completes the proof of Theorem~\ref{thm:bias_amplification}.

\section{Analysis on BeMap} \label{appendix:BeMap_analyse}
In this section, we prove \textbf{centroid consistency} (Appendix~\ref{subsec:centroid_consist_append}) and \textbf{distance shrinkage} (Appendix~\ref{subsec:bemap_dist_shrink}). We first present a variant of Proposition~\ref{prop:indep_topo_feat} in Proposition~\ref{prop:indep_topo_feat_multigroup}.

\begin{proposition} \label{prop:indep_topo_feat_multigroup}
    Given a graph $\mathcal{G}$ and a specific demographic group $\mathcal{S}$, if bias residuals from the demographic group is independent to the adjacency matrix, then the following equations holds
    \begin{equation} \label{subeq:mean_group}
        \mathbb{E}_{v_i \in \mathcal{V}}\left[\mathbb{E}_{v_j \in \mathcal{N}\left(v_i\right)\cap \mathcal{S}}\left[\mathbf{b}_j\right]\right] = \mathbb{E}_{v_j \in \mathcal{S}}\left[\mathbf{b}_j\right]
    \end{equation}
    \vspace{-4mm}
    \begin{equation} \label{subeq:variance_group}
        \begin{aligned}
            &\mathbb{E}_{v_i \in \mathcal{V}}\left[\mathbb{E}_{v_j \in \mathcal{N}\left(v_i\right)\cap \mathcal{S}}\left[\left(\mathbf{b}_j - \mathbb{E}_{v_j\in \mathcal{S}}\left[\mathbf{b}_j\right]\right)^2\right]\right] = \mathbb{E}_{v_j \in \mathcal{S}}\left[\left(\mathbf{b}_j - \mathbb{E}_{v_j \in \mathcal{S}}\left[\mathbf{b}_j\right]\right)^2\right]
        \end{aligned}
    \end{equation}
    where $\mathcal{N}\left(v_i\right)$ means the set of the neighbors of the node $v_i$. 
\end{proposition}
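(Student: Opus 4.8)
The plan is to mirror the contrapositive argument already used for Proposition~\ref{prop:indep_topo_feat} almost verbatim, with every average over $\mathcal{V}$ replaced by an average over the demographic group $\mathcal{S}$ and every $1$-hop neighborhood $\mathcal{N}(v_i)$ replaced by its restriction $\mathcal{N}(v_i)\cap\mathcal{S}$. As in that proof, I would first reduce to Eq.~\eqref{subeq:mean_group}: the variance identity Eq.~\eqref{subeq:variance_group} then follows by the identical reasoning after substituting the node function $\mathbf{b}_j$ with the entrywise square $\left(\mathbf{b}_j - \mathbb{E}_{v_j\in\mathcal{S}}[\mathbf{b}_j]\right)^2$, since the argument never uses anything about $\mathbf{b}_j$ beyond its being a function of the node that is, by hypothesis, independent of $\mathbf{A}$.

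For Eq.~\eqref{subeq:mean_group} I would argue by contradiction: suppose the bias residuals restricted to $\mathcal{S}$ were independent of $\mathbf{A}$ yet the displayed equality failed, i.e. $\mathbb{E}_{v_i}[\mathbb{E}_{v_j\in\mathcal{N}(v_i)\cap\mathcal{S}}[\mathbf{b}_j]] \neq \mathbb{E}_{v_j\in\mathcal{S}}[\mathbf{b}_j]$. Independence forces the posterior over the $2^{N^2}$ possible adjacency matrices, given the $\mathcal{S}$-bias residuals, to be uniform, so no adjacency matrix can be excluded. I would then exhibit two witness topologies on the full vertex set for which the equality holds \emph{identically} in the residual values — hence is incompatible with the assumed failure — so those matrices are ruled out, leaving a posterior supported on at most $2^{N^2}-2$ matrices and a correct-guessing probability at least $\tfrac{1}{2^{N^2}-2} > \tfrac{1}{2^{N^2}}$, the desired contradiction. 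The two witnesses are: (i) the identity adjacency matrix (self-loops only), where $\mathcal{N}(v_i)\cap\mathcal{S}=\{v_i\}$ exactly when $v_i\in\mathcal{S}$, so the left side collapses to $\mathbb{E}_{v_i\in\mathcal{S}}[\mathbf{b}_i]$; and (ii) the graph in which the vertices of $\mathcal{S}$ form a single cycle (each joined to its two cyclic $\mathcal{S}$-neighbors) while every vertex outside $\mathcal{S}$ carries only a self-loop, so that in $\sum_{v_i}\sum_{v_j\in\mathcal{N}(v_i)\cap\mathcal{S}}\mathbf{b}_j$ each $v_j\in\mathcal{S}$ is counted exactly twice and, after normalization, the left side is again $\mathbb{E}_{v_j\in\mathcal{S}}[\mathbf{b}_j]$.

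The main obstacle I anticipate is the bookkeeping around vertices $v_i$ whose $\mathcal{S}$-neighborhood $\mathcal{N}(v_i)\cap\mathcal{S}$ is empty, for which the inner conditional expectation is undefined. I would handle this by fixing the convention (automatic in the \method~setting, where every sampled neighborhood is balanced and hence meets $\mathcal{S}$) that the outer expectation ranges only over $v_i$ with $\mathcal{N}(v_i)\cap\mathcal{S}\neq\emptyset$, and then checking that in each of the two witness topologies that set is exactly $\mathcal{S}$, so both are genuine simple undirected graphs on all $N$ vertices on which the identity is exact. Once this convention is pinned down, the counting is verbatim the counting in Proposition~\ref{prop:indep_topo_feat} and no new estimate is needed.
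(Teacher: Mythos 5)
Your proposal follows the same route as the paper's proof: argue the contrapositive, frame independence as forcing the correct-guessing probability for the adjacency matrix down to $\tfrac{1}{2^{N^2}}$, and then exhibit witness topologies on which Eq.~\eqref{subeq:mean_group} holds identically (reusing the identity and cyclic structures from Proposition~\ref{prop:indep_topo_feat} on the block indexed by $\mathcal{S}$) so that excluding them pushes the probability strictly above random guessing. The one substantive difference is how you wire the vertices outside $\mathcal{S}$: you give them only self-loops, which makes $\mathcal{N}(v_i)\cap\mathcal{S}$ empty for those vertices and forces you to introduce a convention restricting the outer expectation $\mathbb{E}_{v_i\in\mathcal{V}}$ to vertices meeting $\mathcal{S}$ --- a genuine modification of the statement as written. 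The paper instead takes the block
$\mathbf{M}_2$ joining $\mathcal{V}\setminus\mathcal{S}$ to $\mathcal{S}$ to be all ones, so that $\mathcal{N}(v_i)\cap\mathcal{S}=\mathcal{S}$ for every $v_i\notin\mathcal{S}$ and the inner expectation is both well defined and trivially equal to $\mathbb{E}_{v_j\in\mathcal{S}}[\mathbf{b}_j]$ on those vertices; moreover, by leaving the block $\mathbf{M}_3$ among the non-$\mathcal{S}$ vertices arbitrary, the paper excludes at least $2^{(N-\lvert\mathcal{S}\rvert)^2+1}$ adjacency matrices rather than your two, though both counts suffice for the strict inequality. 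If you adopt the paper's all-ones $\mathbf{M}_2$ in place of your self-loops, the empty-intersection bookkeeping you flag as the main obstacle disappears and no convention is needed; your reduction of the variance identity Eq.~\eqref{subeq:variance_group} to the mean identity by substituting the squared, centered residual for $\mathbf{b}_j$ matches the paper's treatment of Proposition~\ref{prop:indep_topo_feat} and carries over unchanged.
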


\begin{proof} \label{proof:expect_equal_group}
    Similar to the proof of Proposition~\ref{prop:indep_topo_feat}, we prove the contrapositive related to Eq.~\eqref{subeq:mean_group}. Specifically, we want to prove that given a graph and a specific demographic group $\mathcal{S}$, if the following equation holds
    \begin{equation} \label{eq:contra_mean_group}
        \mathbb{E}_{v_i \in \mathcal{V}}\left[\mathbb{E}_{v_j \in \mathcal{N}\left(v_i\right)\cap \mathcal{S}}\left[\mathbf{b}_j\right]\right] \neq \mathbb{E}_{v_j \in \mathcal{S}}\left[\mathbf{b}_j\right]
    \end{equation}
    the bias residual matrix from the demographic group $\mathcal{S}$ is not independent to the adjacency matrix.
    To begin with, same as Appendix~\ref{proof:expect_equal}, when the bias residuals are independent to the adjacency matrix, then we can easily have the probability of correctly predicting the adjacency matrix $Pr(random)=\frac{1}{2^{N^2}}$ with $N$ representing the number of nodes.

    Then we will prove that if Eq.~\eqref{eq:contra_mean_group} is true, the probability of correctly predicting the adjacency matrix will be lager than that of random guessing. First, we give every node a unique index. For the nodes from the demographic group $\mathcal{S}$, we assign them consecutive numbers from 1 to $\lvert \mathcal{S} \rvert$, and assign the nodes that do not belong to the demographic group $\mathcal{S}$ the consecutive numbers from $\lvert \mathcal{S} \rvert + 1$ to $N$. Then we sort all the nodes, and use $v_k$ to represent the node with the index $k$.

    Consider the given adjacency matrix $\mathbf{A}^*$ listed below. Specifically, the matrix $\mathbf{M}_1$ is a $\lvert \mathcal{S} \rvert \times \lvert \mathcal{S} \rvert$ matrix which can be any type of adjacency matrices described in Proof~\ref{proof:expect_equal}, i.e., the identity matrix or the adjacency matrix of a cyclic graph. The matrix $\mathbf{M}_2$ is a $(N - \lvert \mathcal{S} \rvert) \times \lvert \mathcal{S} \rvert$ matrix with all the elements being $1$. The matrix $\mathbf{M}_3$ can be any matrix whose shape is $(N - \lvert \mathcal{S} \rvert) \times (N - \lvert \mathcal{S} \rvert)$. We can naturally prove that the adjacency matrix $\mathbf{A}^*$ does not meet the requirement of Eq.~\eqref{eq:contra_mean_group}. For any node $v_k$ from the demographic group $\mathcal{S}$, the neighbor relationship between the node $v_i$ and its neighbors $\mathcal{N}\left(v_i\right) \cap \mathcal{S}$ from the demographic group $\mathcal{S}$ can be depicted by the matrix $\mathbf{M}_1$. Then due to Proof~\ref{proof:expect_equal}, the expression of $\mathbb{E}_{v_i \in \mathcal{S}}\left[\mathbb{E}_{v_j \in \mathcal{N}\left(v_i\right)\cap \mathcal{S}}\left[\mathbf{b}_j\right]\right] = \mathbb{E}_{v_j \in \mathcal{S}}\left[\mathbf{b}_j\right]$ holds true for the nodes from the demographic group $\mathcal{S}$. For the nodes $v_i$ which are not from the demographic group $\mathcal{S}$, their neighbors $\mathcal{N}\left(v_i\right) \cap \mathcal{S}$ from the demographic group $\mathcal{S}$ are the demographic group $\mathcal{S}$ since any element in the matrix $\mathbf{M}_2$ is $1$. Then naturally, the expression of $\mathbb{E}_{v_i \in \mathcal{V} -\mathcal{S}}\left[\mathbb{E}_{v_j \in \mathcal{N}\left(v_i\right)\cap \mathcal{S}}\left[\mathbf{b}_j\right]\right] = \mathbb{E}_{v_j \in \mathcal{S}}\left[\mathbf{b}_j\right]$ holds true. Therefore, the given adjacency matrix $\mathbf{A}^*$ cannot satisfy Eq.~\eqref{eq:contra_mean_group}. 
    $$
    \mathbf{A}^*=
    \begin{bmatrix}
        \mathbf{M}_1 & \mathbf{M}_2^T \\
        \mathbf{M}_2 & \mathbf{M}_3 \\
    \end{bmatrix}
    $$

    Since the matrix $\mathbf{M}_1$ has at least two possible solutions listed in Proof~\ref{proof:expect_equal} and the matrix $\mathbf{M}_3$ has $2^{(N - \lvert \mathcal{S} \rvert) \times (N - \lvert \mathcal{S} \rvert)}$ possible solutions, the matrix $\mathbf{A}^*$ has at least $2^{\left(N - \lvert \mathcal{S} \rvert\right) \times \left(N - \lvert \mathcal{S} \rvert\right) + 1}$ possible solutions. Therefore, the probability of correcting predicting the adjacency matrix satisfies $\textit{Pr}\left(\text{correct}\right) \geq \frac{1}{2^{N^2} - 2^{\left(N - \lvert \mathcal{S} \rvert\right) \times \left(N - \lvert \mathcal{S} \rvert\right) + 1}} > \frac{1}{2^{N^2}} = \textit{Pr}\left(\text{random}\right)$. Thus, the contrapositive is valid, which completes the proof.
\end{proof}

\subsection{Centroid Consistency} \label{subsec:centroid_consist_append}

First, in the following paper, the last subscript separated by the comma on the lower right corner are used to refer to the demographic group to which bias vectors belongs. For example, $\mathbf{b}_{i,s}$ represents the bias vector of the node $v_i$ with the sensitive attribute of $s$. Then the mean of the biased vector $\mathbf{b}_i^{\left(l+1\right)}$ can be calculated as

\begin{equation} \label{eq:centroid_consist_BeMap_app}
    \begin{aligned}
        \mathbb{E}_{v_i \in \mathcal{V}} \left[  \mathbf{b}_i^{\left(l+1\right)}\right]
        &= \mathbb{E}_{v_i \in \mathcal{V}} \left[ \sum_{v_j \in \mathcal{N}_0\left(v_i\right)} \alpha_i \mathbf{b}_{j, 0}^{\left(l\right)} + \sum_{v_j \in \mathcal{N}_1\left(v_i\right)} \alpha_i \mathbf{b}_{j, 1}^{\left(l\right)} \right] \\
         &= \mathbb{E}_{v_i \in \mathcal{V}} \left[  \sum_{v_j \in \mathcal{N}_0\left(v_i\right)} \alpha_i \mathbf{b}_{j, 0}^{\left(l\right)} \right] + \mathbb{E}_{v_i \in \mathcal{V}} \left[ \sum_{v_j \in \mathcal{N}_1\left(v_i\right)} \alpha_i \mathbf{b}_{j, 1}^{\left(l\right)} \right] \\
        &= \mathbb{E}_{v_i \in \mathcal{V}} \left[ \alpha_i \sum_{v_j \in \mathcal{N}_0\left(v_i\right)} \mathbf{b}_{j, 0}^{\left(l\right)} \right] + \mathbb{E}_{v_i \in \mathcal{V}} \left[ \alpha_i \sum_{v_j \in \mathcal{N}_1\left(v_i\right)} \mathbf{b}_{j, 1}^{\left(l\right)} \right] \\
        &= \mathbb{E}_{v_i \in \mathcal{V}} \left[ \alpha_i \vert \mathcal{N}_0\left(v_i\right) \vert \mathbb{E}_{v_j \in \mathcal{N}_0\left(v_i\right)} \left[ \mathbf{b}_{j, 0}^{\left(l\right)} \right] \right] + \mathbb{E}_{v_i \in \mathcal{V}} \left[ \alpha_i \vert \mathcal{N}_1\left(v_i\right) \vert \mathbb{E}_{v_j \in \mathcal{N}_1 \left(v_i\right)} \left[\mathbf{b}_{j, 1}^{\left(l\right)} \right] \right] \\
        &= \mathbb{E}_{v_i \in \mathcal{V}} \left[ r_0 \mathbb{E}_{v_j \in \mathcal{N}_0\left(v_i\right)} \left[ \mathbf{b}_{j, 0}^{\left(l\right)} \right] \right] + \mathbb{E}_{v_i \in \mathcal{V}} \left[ r_1 \mathbb{E}_{v_j \in \mathcal{N}_1 \left(v_i\right)} \left[\mathbf{b}_{j, 1}^{\left(l\right)} \right] \right] \\
        &= r_0 \mathbb{E}_{v_j \in \mathcal{V}_0} \left[ \mathbf{b}_{j, 0}^{\left(l\right)} \right] + r_1 \mathbb{E}_{v_j \in \mathcal{V}_1} \left[ \mathbf{b}_{j, 1}^{\left(l\right)} \right]
    \end{aligned}
\end{equation}
where $\mathcal{N}_s\left(v_i\right)$ represent the neighbors of the node $v_i$ with the sensitive attribute $s$.

For non-binary sensitive attribute $s \in \left\{0, \dots, S-1\right\}$, we can easily rewrite the above results as
\begin{equation} \label{eq:centroid_consist_BeMap_multisens_app}
    \mathbb{E}_{v_i \in \mathcal{V}} \left[  \mathbf{b}_i^{\left(l+1\right)}\right] = \sum_{s=0}^{S-1} r_s \mathbb{E}_{v_j \in \mathcal{V}_s} \left[ \mathbf{b}_{j, s}^{\left(l\right)} \right]
\end{equation}

Actually, Eq.~\eqref{eq:centroid_consist_BeMap_app} and Eq.~\eqref{eq:centroid_consist_BeMap_multisens_app} hold for any node regardless of its sensitive attribute. Therefore, we successfully prove \textbf{centroid consistency}.
 
\subsection{Distance Shrinkage} \label{subsec:bemap_dist_shrink}
We will calculate the distance $\mathbb{E}_{v_i}[\|\mathbf{b}_i^{(l+1)} - \mathbf{\bar \mu}\|_2^2] = \mathbb{E}_{v_i \in \mathcal{V}} \left[ \left(\hat{\mathbf{b}}_i^{\left(l+1\right)}\right)^T\hat{\mathbf{b}}_i^{\left(l+1\right)} \right]$ here. First, we simplify the expression of the bias residual $\hat{\mathbf{b}}_i^{\left(l+1\right)}$

\begin{equation}
    \begin{aligned}
            \hat{\mathbf{b}}_i^{\left(l+1\right)}
            &= \mathbf{b}_i^{\left(l+1\right)} -\mathbb{E}_{v_i \in \mathcal{V}}\left[ \mathbf{b}_i^{\left(l+1\right)} \right] \\
            &= \sum_{v_j \in \mathcal{N}_0\left(v_i\right)} \alpha_i \mathbf{b}_{j, 0}^{\left(l\right)} + \sum_{v_j \in \mathcal{N}_1\left(v_i\right)} \alpha_i \mathbf{b}_{j, 1}^{\left(l\right)} - \mathbb{E}_{v_i \in \mathcal{V}}\left[\sum_{v_j \in \mathcal{N}_0\left(v_i\right)} \alpha_i \mathbf{b}_{j, 0}^{\left(l\right)}\right] - \mathbb{E}_{v_i \in \mathcal{V}}\left[\sum_{v_j \in \mathcal{N}_1\left(v_i\right)} \alpha_i \mathbf{b}_{j, 1}^{\left(l\right)} \right] \\
            &= \sum_{v_j \in \mathcal{N}_0\left(v_i\right)} \alpha_i \mathbf{b}_{j, 0}^{\left(l\right)} + \sum_{v_j \in \mathcal{N}_1\left(v_i\right)} \alpha_i \mathbf{b}_{j, 1}^{\left(l\right)} - \mathbb{E}_{v_i \in \mathcal{V}} \left[ r_0 \mathbb{E}_{v_j \in \mathcal{N}_0\left(v_i\right)} \left[\mathbf{b}_{j, 0}^{\left(l\right)} \right] \right] + \mathbb{E}_{v_i \in \mathcal{V}}\left[r_1 \mathbb{E}_{v_j \in \mathcal{N}_1\left(v_i\right)} \left[\mathbf{b}_{j, 1}^{\left(l\right)} \right] \right] \\
            &= \sum_{v_j \in \mathcal{N}_0\left(v_i\right)} \alpha_i \mathbf{b}_{j, 0}^{\left(l\right)} + \sum_{v_j \in \mathcal{N}_1\left(v_i\right)} \alpha_i \mathbf{b}_{j, 1}^{\left(l\right)} - r_0 \mathbb{E}_{v_j \in \mathcal{V}_0}\left[ \mathbf{b}_{j, 0}^{\left(l\right)}\right] - r_1 \mathbb{E}_{v_j \in \mathcal{V}_1}\left[ \mathbf{b}_{j, 1}^{\left(l\right)}\right] \\
            &= \sum_{v_j \in \mathcal{N}_0\left(v_i\right)} \alpha_i \left(\mathbf{b}_{j, 0}^{\left(l\right)} - \mathbb{E}_{v_j \in \mathcal{V}_0}\left[ \mathbf{b}_{j, 0}^{\left(l\right)}\right]\right) + \sum_{v_j \in \mathcal{N}_1\left(v_i\right)} \alpha_i \left( \mathbf{b}_{j, 1}^{\left(l\right)} - \mathbb{E}_{v_j \in \mathcal{V}_1}\left[ \mathbf{b}_{j, 1}^{\left(l\right)}\right]\right) \\
            &= \sum_{v_j \in \mathcal{N}_0\left(v_i\right)} \alpha_i \hat{\mathbf{b}}_{j, 0}^{\left(l\right)} + \sum_{v_j \in \mathcal{N}_1\left(v_i\right)} \alpha_i \hat{\mathbf{b}}_{j, 1}^{\left(l\right)}
    \end{aligned}
\end{equation}
where $\mathcal{N}_s\left(v_i\right)$ represent the set of neighbors with the sensitive attribute of $s$ for the node $v_i$, $\mathcal{V}_s$ represent the set of all the nodes with the sensitive attribute of $s$, and $\hat{\mathbf{b}}_{j, s}^{\left(l\right)} = \mathbf{b}_{j, s}^{\left(l\right)} - \mathbb{E}_{v_j \in \mathcal{V}_s}\left[ \mathbf{b}_{j, s}^{\left(l\right)}\right], s \in \left\{0, 1\right\}$. Obviously, the mean of the distribution of $\hat{\mathbf{b}}_{j, s}^{\left(l\right)}$ is $\mathbf{0}$. Then the expectation of distance will be expressed as:

\begin{equation}
    \begin{aligned}
        &\mathbb{E}_{v_i \in \mathcal{V}} \left[ \left(\hat{\mathbf{b}}_i^{\left(l+1\right)}\right)^T\hat{\mathbf{b}}_i^{\left(l+1\right)} \right] \\
        &=\mathbb{E}_{v_i \in \mathcal{V}} \left[ \left(\sum_{v_j \in \mathcal{N}_0\left(v_i\right)} \alpha_i \hat{\mathbf{b}}_{j, 0}^{\left(l\right)} + \sum_{v_j \in \mathcal{N}_1\left(v_i\right)} \alpha_i \hat{\mathbf{b}}_{j, 1}^{\left(l\right)}\right)^T \left(\sum_{v_j \in \mathcal{N}_0\left(v_i\right)} \alpha_i \hat{\mathbf{b}}_{j, 0}^{\left(l\right)} + \sum_{v_j \in \mathcal{N}_1\left(v_i\right)} \alpha_i \hat{\mathbf{b}}_{j, 1}^{\left(l\right)}\right)\right] \\
        &= \mathbb{E}_{v_i \in \mathcal{V}}\left[ \alpha_i^2 \sum_{v_j \in \mathcal{N}_0\left(v_i\right)} \sum_{v_k \in \mathcal{N}_0\left(v_i\right)} \left(\hat{\mathbf{b}}_{j, 0}^{\left(l\right)}\right)^T \hat{\mathbf{b}}_{k, 0}^{\left(l\right)} \right] + \mathbb{E}_{v_i \in \mathcal{V}}\left[ \alpha_i^2 \sum_{v_j \in \mathcal{N}_1\left(v_i\right)} \sum_{v_k \in \mathcal{N}_1\left(v_i\right)} \left(\hat{\mathbf{b}}_{j, 1}^{\left(l\right)}\right)^T \hat{\mathbf{b}}_{k, 1}^{\left(l\right)} \right] \\
        & \qquad + 2 \mathbb{E}_{v_i \in \mathcal{V}} \left[ \alpha_i^2 \sum_{v_j \in \mathcal{N}_0\left(v_i\right)} \sum_{v_k \in \mathcal{N}_1\left(v_i\right)} \left(\hat{\mathbf{b}}_{j, 0}^{\left(l\right)}\right)^T \hat{\mathbf{b}}_{k, 1}^{\left(l\right)} \right] \\
        &\leq 2 \underbrace{\mathbb{E}_{v_i \in \mathcal{V}}\left[ \alpha_i^2 \sum_{v_j \in \mathcal{N}_0\left(v_i\right)} \sum_{v_k \in \mathcal{N}_0\left(v_i\right)} \left(\hat{\mathbf{b}}_{j, 0}^{\left(l\right)}\right)^T \hat{\mathbf{b}}_{k, 0}^{\left(l\right)} \right]}_{\text{\ding{172}}} + 2 \underbrace{\mathbb{E}_{v_i \in \mathcal{V}}\left[ \alpha_i^2 \sum_{v_j \in \mathcal{N}_1\left(v_i\right)} \sum_{v_k \in \mathcal{N}_1\left(v_i\right)} \left(\hat{\mathbf{b}}_{j, 1}^{\left(l\right)}\right)^T \hat{\mathbf{b}}_{k, 1}^{\left(l\right)} \right]}_{\text{\ding{173}}} \\
    \end{aligned}
\end{equation}

The first item \ding{172} can be simplified as
\begin{equation}
    \begin{aligned}
        &\mathbb{E}_{v_i \in \mathcal{V}}\left[ \alpha_i^2 \sum_{v_j \in \mathcal{N}_0\left(v_i\right)} \sum_{v_k \in \mathcal{N}_0\left(v_i\right)} \left(\hat{\mathbf{b}}_{j, 0}^{\left(l\right)}\right)^T \hat{\mathbf{b}}_{k, 0}^{\left(l\right)} \right] \\
        &= \mathbb{E}_{v_i \in \mathcal{V}} \left[ \alpha_i^2 \sum_{v_j \in \mathcal{N}_0\left(v_i\right)} \left(\hat{\mathbf{b}}_{j, 0}^{\left(l\right)}\right)^T \hat{\mathbf{b}}_{j, 0}^{\left(l\right)}\right] + \mathbb{E}_{v_i \in \mathcal{V}} \left[ \alpha_i^2 \sum_{v_j \in \mathcal{N}_0\left(v_i\right)} \sum_{v_k \in \mathcal{N}_0\left(v_i\right) \setminus \left\{j\right\}}\left(\hat{\mathbf{b}}_{j, 0}^{\left(l\right)}\right)^T \hat{\mathbf{b}}_{k, 0}^{\left(l\right)}\right] \\
        &= \mathbb{E}_{v_i \in \mathcal{V}} \left[ \alpha_i^2 d^{(0)}_i \mathbb{E}_{v_j \in \mathcal{N}_0\left(v_i\right)}\left[ \left(\hat{\mathbf{b}}_{j, 0}^{\left(l\right)}\right)^T \hat{\mathbf{b}}_{j, 0}^{\left(l\right)}\right] \right] + \mathbb{E}_{v_i \in \mathcal{V}} \left[\alpha_i^2\right] \frac{\lvert \mathcal{V}_0 \rvert}{\lvert \mathcal{V} \rvert} \mathbb{E}_{v_j \in \mathcal{V}_0} \left[ \left(\hat{\mathbf{b}}_{j, 0}^{\left(l\right)}\right)^T \sum_{v_k \in \widetilde{\mathcal{N}}_0\left(v_j\right)} \hat{\mathbf{b}}_{k, 0}^{\left(l\right)} \right] \\
        &= \mathbb{E}_{v_i \in \mathcal{V}} \left[ \alpha_i^2 d^{(0)}_i \right] \mathbb{E}_{v_j \in \mathcal{V}_0}\left[ \left(\hat{\mathbf{b}}_{j, 0}^{\left(l\right)}\right)^T \hat{\mathbf{b}}_{j, 0}^{\left(l\right)}\right] + \mathbb{E}_{v_i \in \mathcal{V}} \left[\alpha_i^2\right] \frac{\lvert \mathcal{V}_0 \rvert}{\lvert \mathcal{V} \rvert} \mathbb{E}_{v_j \in \mathcal{V}_0} \left[ \left(\hat{\mathbf{b}}_{j, 0}^{\left(l\right)}\right)^T\right] \mathbb{E}_{v_j \in \mathcal{V}_0}\left[ \sum_{v_k \in \widetilde{\mathcal{N}}_0\left(v_j\right)} \hat{\mathbf{b}}_{k, 0}^{\left(l\right)}\right] \\
        &= \mathbb{E}_{v_i \in \mathcal{V}} \left[ r_0 \alpha_i\right] \mathbb{E}_{v_j \in \mathcal{V}_0}\left[ \left(\hat{\mathbf{b}}_{j, 0}^{\left(l\right)}\right)^T \hat{\mathbf{b}}_{j, 0}^{\left(l\right)}\right] + 0\\
        &=r_0\mathbb{E}_{v_i \in \mathcal{V}} \left[ \frac{1}{d_i}\right] \mathbb{E}_{v_j \in \mathcal{V}_0}\left[ \left(\hat{\mathbf{b}}_{j, 0}^{\left(l\right)}\right)^T \hat{\mathbf{b}}_{j, 0}^{\left(l\right)}\right]
    \end{aligned}
\end{equation}
where $d_i^{(s)}$ the the number of the node $v_i$'s neighbors with the sensitive attribute of $s$, and $\widetilde{\mathcal{N}}_s\left(v_i\right)$ is the neighbors from $\widetilde{\mathcal{N}}\left(v_i\right)$ with the sensitive attribute of $s, s\in \left\{0, 1\right\}$.

Similarly, the second item \ding{173} can be simplified as:
\begin{equation}
    \mathbb{E}_{v_i \in \mathcal{V}}\left[ \alpha_i^2 \sum_{v_j \in \mathcal{N}_1\left(v_i\right)} \sum_{v_k \in \mathcal{N}_1\left(v_i\right)} \left(\hat{\mathbf{b}}_{j, 1}^{\left(l\right)}\right)^T \hat{\mathbf{b}}_{k, 1}^{\left(l\right)} \right] = r_1\mathbb{E}_{v_i \in \mathcal{V}} \left[ \frac{1}{d_i}\right] \mathbb{E}_{v_j \in \mathcal{V}_1}\left[ \left(\hat{\mathbf{b}}_{j, 1}^{\left(l\right)}\right)^T \hat{\mathbf{b}}_{j, 1}^{\left(l\right)}\right]
\end{equation}

Combining everything together, we have the following relationship between the distances to centroids before and after message passing.

\begin{equation}
    \begin{aligned}
        & \mathbb{E}_{v_i \in \mathcal{V}} \left[ \left(\hat{\mathbf{b}}_i^{\left(l+1\right)}\right)^T\hat{\mathbf{b}}_i^{\left(l+1\right)} \right] \\
        & = r_0\mathbb{E}_{v_i \in \mathcal{V}} \left[ \frac{2}{d_i}\right] \mathbb{E}_{v_j \in \mathcal{V}_0}\left[ \left(\hat{\mathbf{b}}_{j, 0}^{\left(l\right)}\right)^T \hat{\mathbf{b}}_{j, 0}^{\left(l\right)}\right] + r_1\mathbb{E}_{v_i \in \mathcal{V}} \left[ \frac{2}{d_i}\right] \mathbb{E}_{v_j \in \mathcal{V}_1}\left[ \left(\hat{\mathbf{b}}_{j, 1}^{\left(l\right)}\right)^T \hat{\mathbf{b}}_{j, 1}^{\left(l\right)}\right] \\
        &= \mathbb{E}_{v_i \in \mathcal{V}} \left[ \frac{2}{d_i}\right] \left( r_0 \mathbb{E}_{v_j \in \mathcal{V}_0}\left[ \left(\hat{\mathbf{b}}_{j, 0}^{\left(l\right)}\right)^T \hat{\mathbf{b}}_{j, 0}^{\left(l\right)}\right] + r_1 \mathbb{E}_{v_j \in \mathcal{V}_1}\left[ \left(\hat{\mathbf{b}}_{j, 1}^{\left(l\right)}\right)^T \hat{\mathbf{b}}_{j, 1}^{\left(l\right)}\right]\right)
    \end{aligned}
\end{equation}

For the non-binary sensitive attribute $s \in \left\{0, 1, \dots, S-1\right\}$, similarly, we have 
\begin{equation}
    \mathbb{E}_{v_i \in \mathcal{V}} \left[ \left(\hat{\mathbf{b}}_i^{\left(l+1\right)}\right)^T\hat{\mathbf{b}}_i^{\left(l+1\right)} \right] = \mathbb{E}_{v_i \in \mathcal{V}} \left[\frac{S}{d_i}\right] \sum_{s=0}^{S-1} r_s \mathbb{E}_{v_j \in \mathcal{V}_s}\left[ (\hat{\mathbf{b}}_{j, s}^{\left(l\right)})^T \hat{\mathbf{b}}_{j, s}^{\left(l\right)}\right]
\end{equation}

Let us revisit the discussion concerning binary sensitive attribute. Since the local neighborhood is large enough mentioned in Theorem~\ref{thm:bemap}, i.e., $d_i > 2$ for any node $v_i$, we have
\begin{equation}
    \begin{aligned}
        \mathbb{E}_{v_i}[\|\mathbf{b}_i^{(l+1)} - \mathbf{\bar \mu}\|_2^2]
        & = \mathbb{E}_{v_i \in \mathcal{V}} \left[ \left(\hat{\mathbf{b}}_i^{\left(l+1\right)}\right)^T\hat{\mathbf{b}}_i^{\left(l+1\right)} \right] \\
        & \leq \mathbb{E}_{v_i \in \mathcal{V}} \left[ \frac{2}{d_i}\right] \left( r_0 \mathbb{E}_{v_j \in \mathcal{V}_0}\left[ \left(\hat{\mathbf{b}}_{j, 0}^{\left(l\right)}\right)^T \hat{\mathbf{b}}_{j, 0}^{\left(l\right)}\right] + r_1 \mathbb{E}_{v_j \in \mathcal{V}_1}\left[ \left(\hat{\mathbf{b}}_{j, 1}^{\left(l\right)}\right)^T \hat{\mathbf{b}}_{j, 1}^{\left(l\right)}\right]\right) \\
        & < \left( r_0 \mathbb{E}_{v_j \in \mathcal{V}_0}\left[ \left(\hat{\mathbf{b}}_{j, 0}^{\left(l\right)}\right)^T \hat{\mathbf{b}}_{j, 0}^{\left(l\right)}\right] + r_1 \mathbb{E}_{v_j \in \mathcal{V}_1}\left[ \left(\hat{\mathbf{b}}_{j, 1}^{\left(l\right)}\right)^T \hat{\mathbf{b}}_{j, 1}^{\left(l\right)}\right]\right) \\
        & < \mathbb{E}_{v_i \in \mathcal{V}} \left[ \left(\hat{\mathbf{b}}_i^{\left(l\right)}\right)^T\hat{\mathbf{b}}_i^{\left(l\right)} \right] \\
        & < \mathbb{E}_{v_i}\left[\|\mathbf{b}_i^{(l)} - \mathbf{\bar \mu}\|_2^2\right]
    \end{aligned}
\end{equation}
which completes the proof.

\section{Pseudocode of \method} \label{sec:alg_appendix}
The pseudocode of \method~is presented in Algorithm~\ref{alg:bemap}. Before training, we precompute the sampling probability in the balance-aware sampling (lines 3 -- 5). During each epoch, we first generate the fair neighborhood using the balance-aware sampling (lines 7 -- 11). Then for each hidden layer, the fair node representation of each node is learned on the fair neighborhood (lines 12 -- 15). Finally, we update the model parameters with back-propagation (lines 16 -- 17).
\begin{algorithm}[h!]
	\SetKwInOut{Input}{Input}
	\SetKwInOut{Output}{Output}
	\Input{An input graph $\mathcal{G} = \left\{\mathcal{V}, \mathbf{A}, \mathbf{X}\right\}$, a set of training nodes $\mathcal{V}_{\text{train}}$, ground-truth labels $\mathcal{Y}_{\text{train}}$, an $L$-layer GCN with weight matrices $\mathbf{\Theta}=\left\{\mathbf{W}^{\left(1\right)},\ldots,\mathbf{W}^{\left(L\right)}\right\}$, a task-specific loss function $J$, maximum number of epochs ${\rm epoch}_{\rm max}$, hyperaprameters $\beta$, $m$, $\delta$;}
	\Output{A well trained GCN $\mathbf{\Theta}=\left\{\mathbf{W}^{(1)},\ldots,\mathbf{W}^{\left(L\right)}\right\}$.}
	Initialize $\mathbf{H}^{\left(1\right)} = \mathbf{X}$\;
	Initialize the gradient-based optimizer $\textit{OPT}$\;
	\For{each node $v_i\in\mathcal{V}$}{
	    \For{each node $v_j\in\mathcal{N}\left(v_i\right)$}{
	        Precompute $P(v_j|v_i)$ by Eqs.~\eqref{eq:balance_score} and~\eqref{eq:balance_aware_sampling}\;
	    }
	}
	\For{${\rm epoch}=1\rightarrow {\rm epoch}_{\rm max}$}{
	    \tcp{Fair neighborhood generation}
	    \For{each node $v_i\in\mathcal{V}$}{
	        \eIf{$\forall v_j\in\mathcal{\widehat N}\left(v_i\right)$ belongs to the same demographic group}{
	            Sample the fair neighborhood $\mathcal{\widehat N}^{\text{fair}}\left(v_i\right) = \mathcal{N}^{\text{fair}}\left(v_i\right)\cup \left\{v_i\right\}$ with probability $P(v_j|v_i),\forall v_j\in\mathcal{N}\left(v_i\right)$ such that $|\mathcal{N}^{\text{fair}}\left(v_i\right)| = \max\left\{\beta|\mathcal{N}\left(v_i\right)|, m\right\}$\;
	        }{
	            Sample the neighborhood $\mathcal{N}^{\text{fair}}\left(v_i\right)$ with probability $P(v_j|v_i),\forall v_j\in\mathcal{N}\left(v_i\right)$ and generate the fair neighborhood $\mathcal{\widehat N}^{\text{fair}}\left(v_i\right) = \mathcal{N}^{\text{fair}}\left(v_i\right)\cup \left\{v_i\right\}$ such that $|\mathcal{\widehat N}_0^{\text{fair}}\left(v_i\right)| = |\mathcal{\widehat N}_1^{\text{fair}}\left(v_i\right)|$
	        }
	    }
	    \tcp{Forward propagation}
	    \For{each hidden layer $l\in\left\{1,\ldots,L\right\}$}{
	        \For{each node $v_i\in\mathcal{V}$}{
	            $\mathbf{\widehat h}^{(l)}_i = \sum_{v_j \in \mathcal{\widehat N}\left(v_i\right)} \alpha_{ij} \mathbf{h}_j^{(l)}$ with $\alpha_{ij} = \frac{1}{|\mathcal{\widehat N}^{\text{fair}}\left(v_i\right)|}$ if row normalization else $\alpha_{ij} = \frac{1}{\sqrt{|\mathcal{\widehat N}^{\text{fair}}\left(v_i\right)|}\sqrt{|\mathcal{\widehat N}^{\text{fair}}(v_j)|}}$\;
	            $\mathbf{h}_i^{(l+1)} = \sigma\big(\mathbf{\widehat h}^{(l)}_i \mathbf{W}^{(l)}\big)$;
	        }
	    }
	    \tcp{Backward propagation}
	    Calculate the empirical loss $\text{loss} = J(\mathcal{V}_{\text{train}}, \mathcal{Y}_{\text{train}}, \left\{\mathbf{h}_i^{(L+1)},\forall v_i\in\mathcal{V}\right\})$\;
        Update $\mathbf{\Theta}$ by $\textit{OPT}(\nabla \text{loss})$\;
	}
	\Return $\mathbf{\Theta}=\left\{\mathbf{W}^{(1)},\ldots,\mathbf{W}^{(L)}\right\}$\;
	\caption{Training GCN with \method.}
	\label{alg:bemap}
\end{algorithm}

\section{Extension of \method to Non-binary Sensitive Attribute} \label{sec:nonbinary_appendix}
We consider a non-binary sensitive attribute $s$ which forms $n_s$ demographic groups, i.e., $s\in\left\{1,\ldots, n_s\right\}$. The key idea of \method~is to balance the number of neighbors across different demographics. We discuss two cases for balancing the neighborhood in the following.
\begin{itemize}[
    align=left,
    leftmargin=2em,
    itemindent=0pt,
    labelsep=0pt,
    labelwidth=1em,
]
    \item \textit{When all neighbors belong to one demographic group:} We adopt the same strategy as the case of binary sensitive attribute by sampling a subset of $k$ neighbors for any node $v_i$ such that $k = \max\left\{\beta|\mathcal{N}\left(v_i\right)|, 4\right\}$. 
    \item \textit{When the neighbors belong to different demographic groups:} In this case, for any node $v_i$, we first count the number of neighbors in each demographic group $\left\{\mathcal{\widehat N}_s\left(v_i\right) \vert \forall s = 1, \ldots, n_s\right\}$. Then we set the number of neighbors to be sampled for any demographic group $k$ as the smallest non-zero value in $\left\{\mathcal{\widehat N}_s\left(v_i\right) \vert \forall s = 1, \ldots, n_s\right\}$. After that, for each demographic group in the neighborhood of $v_i$, we sample $k$ neighbors to create a balanced neighborhood.
\end{itemize}
Regarding the balance-aware sampling probability in the above sampling steps, we modify Eq.~\eqref{eq:balance_score} by replacing the absolute difference $|{\widetilde N}_0\left(v_i\right) - {\widetilde N}_1\left(v_i\right)|$ in the cardinalities of two demographic groups in binary case to the average squared difference between the cardinalities of any two demographic groups.
\begin{equation}\label{eq:balance_score_nonbinary}
\begin{aligned}
& \text{balance}_i = \frac{1}{\sqrt{\frac{2}{n_s\left(n_s-1\right)} \sum_{s=1}^{n_s} \left(\mathcal{\widehat N}_k\left(v_i\right) - \mathcal{\widehat N}_j\left(v_i\right)\right)^2} + \delta} \\
& \forall k,j\in\left\{1, \ldots, n_s\right\}, k \neq j
\end{aligned}
\end{equation}
It should be noted that when $s$ is binary sensitive attribute, Eq.~\eqref{eq:balance_score_nonbinary} is equivalent to Eq.~\eqref{eq:balance_score}. In this way, we could adopt similar training procedure in Algorithm~\ref{alg:bemap} to train the fair graph neural network.

\section{Detailed Experimental Settings}
\subsection{Dataset Descriptions} \label{sec:appendix_datasets}
Here, we provide detailed descriptions for \textit{Pokec-z}~\cite{nguyen2021graphdta}, \textit{NBA}~\cite{dai2021say}, \textit{Credit}~\cite{dong2022edits}, and \textit{Recidivism}~\cite{agarwal2021towards}. 
The \textit{Pokec-z} dataset \cite{nguyen2021graphdta} is drawn from Pokec, a Facebook-like social network in Slovakia, based on regional information. In Pokec-z, we set the sensitive attribute as the region of a user and aim to predict the field of work of a user.
The \textit{NBA} dataset~\cite{dai2021say} includes the age, nationality (US vs. overseas), salary of a NBA player for the 2016 -- 2017 season. Two players are connected if one follows another on Twitter. In this dataset, the nationality is used as the sensitive attribute, and the goal is to predict whether the salary of a player is above the median.
The \textit{Credit} dataset~\cite{dong2022edits} contains age, education and payment patterns of a credit card customer. The links among customers are determined by the pairwise similarity of their credit accounts. Here, age is set as the sensitive attribute and the label is whether a user will default on credit card payments. 
The \textit{Recidivism} dataset~\cite{agarwal2021towards} consists of defendants who were released on bail during 1990 -- 2009. Two defendants are connected if they share similar demographic information and criminal records. The goal is to predict whether a defendant is likely to commit a crime when released (i.e., bail) or not (i.e., no bail) with race as the sensitive attribute.
The detailed statistics of the four datasets are provided in Table~\ref{tbl:statics}.

\begin{table}[h]
    \centering
    \caption{The statistics of datasets.}
    \begin{tabular}{c|cccc}
        \hline
         & Pokec-z & NBA & Recidivism & Credit  \\
        \hline 
        \# Nodes & 5,631 & 313 & 9,538 & 21,000 \\
        \# Edges & 17,855 & 14,543 & 167,930 & 160,198 \\
        \# Attributes & 59 & 39 & 18 & 13 \\
        \# Classes & 2 & 2 & 2 & 2 \\
        Avg. Degrees & 6.4 & 92.9 & 35.2 & 15.3 \\
        Sensitive Attr. & Region & Country & Race & Age \\ 
        Label & Working field & Salary & Bail decision & Future default \\
        \hline
    \end{tabular}
    \label{tbl:statics}
\end{table}

\subsection{Descriptions of Baseline Methods}\label{sec:appendix_baseline_methods}
Regarding graph neural networks without fairness considerations, 
\begin{itemize}[
    align=left,
    leftmargin=2em,
    itemindent=0pt,
    labelsep=0pt,
    labelwidth=1em,
]
    \item \textit{GCN}~\cite{kipf2016semi} learns the representation of a node by iteratively aggregating the representations of its 1-hop neighbors; 
    \item \textit{GraphSAGE}~\cite{hamilton2017inductive} aggregates node representations from a subset of 1-hop neighbors.
\end{itemize}

For fair graph neural networks,
\begin{itemize}[
    align=left,
    leftmargin=2em,
    itemindent=0pt,
    labelsep=0pt,
    labelwidth=1em,
]
    \item \textit{FairGNN}~\cite{dai2021say} leverages adversarial learning to debias the node representations;
    \item \textit{EDITS}~\cite{dong2022edits} removes bias in the input data by minimizing the Wasserstein distance;
    \item \textit{FairDrop}~\cite{spinelli2021biased} mitigates bias by randomly masking edges in the input graph; 
    \item \textit{NIFTY}~\cite{agarwal2021towards} debias the graph neural networks with a contrastive learning framework; 
    \item \textit{FMP}~\cite{jiang2022fmp} redesigns the message passing schema in Graph Convolutional Network for bias mitigation.
\end{itemize}

\section{Additional Experimental Results}
\begin{table}[htbp]
    \centering
    \caption{
    Relative reduction with respect to $\spmetric$ and $\eometric$ for \method~on Pokec-z, NBA, Recidivism, and Credit. For \method~(row) and \method~(sym), the relative reduction is computed by comparing to the vanilla GCN. For \method~(gat), the relative reduction is computed by comparing to the vanilla GAT.}
    \resizebox{\linewidth}{!}{
    \begin{tabular}{c|c|c|c|c}
    \hline
        \multirow{2}{*}{Methods} & \multicolumn{2}{c}{Pokec-z} & \multicolumn{2}{c}{NBA} \\
        \cline{2-5}
        & $\text{Reduction}_{\text{SP}}$ (\%) $\uparrow$ &$\text{Reduction}_{\text{EO}}$ (\%) $\uparrow$ & $\text{Reduction}_{\text{SP}}$ (\%) $\uparrow$ & $\text{Reduction}_{\text{EO}}$ (\%) $\uparrow$ \\
        \hline
        \method~(row) & 91.7 & 80.0 & 2.25 & 68.8 \\
        \method~(sym) & 83.6 & 86.8 & 0.12 & 70.8 \\
        \method~(gat) & 91.6 & 89.5 & 32.8 & 57.9 \\
    \hline
    \end{tabular}
    }
    \resizebox{\linewidth}{!}{
    \begin{tabular}{c|c|c|c|c}
    \hline
        \multirow{2}{*}{Methods} & \multicolumn{2}{c}{Recidivism}  & \multicolumn{2}{c}{Credit} \\
        \cline{2-5}
        & $\text{Reduction}_{\text{SP}}$ (\%) $\uparrow$ & $\text{Reduction}_{\text{EO}}$ (\%) $\uparrow$ & $\text{Reduction}_{\text{SP}}$ (\%) $\uparrow$ & $\text{Reduction}_{\text{EO}}$ (\%) $\uparrow$\\
        \hline
        \method~(row) & 66.3 & 63.7 & 41.4 & 29.5 \\
        \method~(sym) & 79.3 & 53.4 & 40.1 & 45.7 \\
        \method~(gat) & 50.4 & 27.8 & 27.2 & 29.6 \\
    \hline
    \end{tabular}
    }
    \label{tab:percentage_of_deline_appendix}
\end{table}

\noindent \textbf{Relative Reductions with respect to $\spmetric$ and $\eometric$.} We present the relative reduction of $\spmetric$ and $\eometric$ for \method~in Table~\ref{tab:percentage_of_deline_appendix}. The relative reductions with respect to $\spmetric$ and $\eometric$ are defined as follows.
\begin{equation}
\begin{aligned}
    \text{Reduction}_{\text{SP}} = \left(1 - \frac{\spmetric^{\text{\method}}}{\spmetric^{\text{vanilla}}}\right) \times 100\% \\
    \text{Reduction}_{\text{EO}} = \left(1 - \frac{\eometric^{\text{\method}}}{\eometric^{\text{vanilla}}}\right) \times 100\%
\end{aligned}
\end{equation}
where $\spmetric^{\text{\method}}$ and $\eometric^{\text{\method}}$ are $\spmetric$ and $\eometric$ for the proposed \method, respectively, and $\spmetric^{\text{vanilla}}$ and $\eometric^{\text{vanilla}}$ are $\spmetric$ and $\eometric$ for the corresponding vanilla graph neural network, i.e., vanilla GCN for \method~(row) and \method~(sym) and vanilla GAT for \method~(gat), respectively.
From Table~\ref{tab:percentage_of_deline_appendix}, we observe that \method~consistently reduce more than 25\% of $\spmetric$ and $\eometric$ on all datasets, except for \method~(row) and \method~(sym) on NBA. On Pokec-z, \method~could even reduce more than 80\% of $\spmetric$ and $\eometric$ compared to vanilla graph neural networks without fairness consideration.

\begin{table}[htbp]
    \centering
    \caption{Semi-supervised node classification on the Pokec-n. Higher is better ($\uparrow$) for ACC and AUC (white). Lower is better ($\downarrow$) for $\spmetric$ and $\eometric$ (gray). Bold font indicates the best performance for fair graph neural networks, and underlined number indicates the second best.}
    \begin{tabular}{c|c|c|c|c}
        \hline
         \multirow{2}{*}{\textbf{Methods}} & \multicolumn{4}{c}{Pokec-n}\\
         \cline{2-5}
          & ACC (\%) $\uparrow$ & AUC (\%) $\uparrow$ & \cellcolor{black!15} $\spmetric$ (\%) $\downarrow$ & \cellcolor{black!15} $\eometric$ (\%) $\downarrow$ \\
          \hline
          GCN & 68.41 $\pm$ 0.30 & 73.4 $\pm$ 0.09 & \cellcolor{black!15} 6.83 $\pm$ 1.09 & \cellcolor{black!15} 9.59 $\pm$ 1.16 \\
          GraphSAGE & 65.41 $\pm$ 0.74 & 69.35 $\pm$ 0.76 & \cellcolor{black!15} 6.81 $\pm$ 0.47 & \cellcolor{black!15} 11.20 $\pm$ 0.84 \\
          GAT & 65.81 $\pm$ 0.43 & 69.30 $\pm$ 0.86 & \cellcolor{black!15} 6.63 $\pm$ 0.68 & \cellcolor{black!15} 10.34 $\pm$ 1.37 \\ 
          \hline
          FairGNN & 68.20 $\pm$ 0.87 & \underline{72.92 $\pm$ 0.18} &\cellcolor{black!15} 8.73 $\pm$ 1.37 &\cellcolor{black!15} 11.02 $\pm$ 1.32 \\
          EDITS & 66.82 $\pm$ 0.98 & 70.39 $\pm$ 0.02 &\cellcolor{black!15} \textbf{4.32 $\pm$ 0.43} &\cellcolor{black!15} 6.20 $\pm$ 0.74 \\
          NIFTY & 66.91 $\pm$ 1.38 & 71.27 $\pm$ 0.32 & \cellcolor{black!15}7.72 $\pm$ 1.06 &\cellcolor{black!15} \textbf{5.83 $\pm$ 0.43}\\
          FMP & 67.62$\pm$1.27 & \textbf{77.40$\pm$0.23} &\cellcolor{black!15} 32.78$\pm$1.89 & \cellcolor{black!15}29.67$\pm$2.51 \\
          \hline
          \method~(row) & \underline{68.39 $\pm$ 0.52} & 71.58 $\pm$ 0.61 &\cellcolor{black!15} 5.63 $\pm$ 2.65 &\cellcolor{black!15} \underline{6.11 $\pm$ 1.85} \\
          \method~(sym) & \textbf{69.12 $\pm$ 0.76} & 71.59 $\pm$ 1.12 &\cellcolor{black!15} \underline{5.45 $\pm$ 0.62} &\cellcolor{black!15} 7.97 $\pm$ 0.61\\
          \method~(gat) & 65.47 $\pm$ 0.97 & 67.95 $\pm$ 0.39 &\cellcolor{black!15} 5.67 $\pm$ 0.56 &\cellcolor{black!15} 8.73 $\pm$ 0.50 \\
        \hline
    \end{tabular}
    \label{tab:pokec_n_results}
\end{table}

\noindent \textbf{Additional Results on Pokec-n.} We conduct experiments on another social network named Pokec-n. Similar to Pokec-z, it is also drawn from the Slovakian social network Pokec, but focuses on different province in the country compared to Pokec-z. Additionally, the sensitive attribute is selected as the region of a user and aim to predict the field of work of a user, which is consistent with the settings in Pokec-z. Experimental results on Pokec-n is shown in Table~\ref{tab:pokec_n_results}. From the table, \method~can still consistently mitigate bias, despite being the second bias method in terms of bias mitigation (as shown by $\spmetric$ and $\eometric$). In the meanwhile, \method~achieves the best classification results (as shown by ACC and AUC) while mitigating bias, which demonstrate that \method~achieves the best trade-off between fairness and utility.

\section{More Related Works} \label{appdix:related_work}
Here, we discuss more related works in designing fair message passing, and a few related work on class-imbalanced graph learning, graph rewiring and graph sampling. 

\noindent \textbf{Fair message passing} ensures fairness on graphs by either preprocessing the input graph or redesigning message passing in graph neural networks~\cite{zhu2023fairness, li2021dyadic, subramonian2022discrimination}. \cite{zhu2023fairness} views message passing as the solution to an optimization and introduces Maximum Mean Discrepancy (MMD) as a regularization in the optimization problem to redesign the fairness-aware message passing. \cite{li2021dyadic} learns a fair adjacency matrix for link prediction by aligning the edge weights between intra-group edges and inter-group edges. \cite{subramonian2022discrimination} reduces the discrimination risk, in order to ensure fairness on mean aggregation feature imputation. Our work bears subtle differences existing works. Compared to \cite{zhu2023fairness}, we do not change how neighborhood aggregation procedure in message passing, but change the neighborhood itself to ensure fairness. Compared to \cite{li2021dyadic}, we focus on fairness in node classification rather than dyadic fairness in link prediction. Compared to \cite{subramonian2022discrimination}, we only perform sampling on the adjacency matrix and do not consider changing node features.

\noindent \textbf{Class-imbalanced graph learning} refers to graph learning with uneven label distribution, i.e., one class has a significantly higher number of data than other classes~\cite{zhao2021graphsmote, park2022graphens, qu2021imgagn, chen2021topology}. For example, \cite{park2022graphens} mixes up the nodes from the minority class and selected target nodes. \cite{qu2021imgagn} generates synthetic nodes from the minority class via generative adversarial networks (GAN)~\cite{goodfellow2020generative}. \cite{chen2021topology} reweighs the influence of labeled node adaptively based on their distances to class boundaries. \cite{zhao2021graphsmote} generalizes SMOTE~\cite{chawla2002smote} to graphs and synthesizes new samples in the embedding space. It should be noted that, different from this line of work that focuses on the imbalance with respect to class label, our work focuses on the imbalance with respect to a sensitive attribute. For more related work, please refer to \cite{ma2023class}.

\noindent \textbf{Graph rewiring} changes the graph topology by rewiring edges in the graph (i.e., delete an edge from a source node to a target node and add an edge between the source node to a different target node) to find a better graph topology for the learning task~\cite{ma2019attacking, bi2022make, arnaiz2022diffwire, chan2016optimizing}. \cite{chan2016optimizing} proposes degree-preserving edge rewiring to maximally improve its robustness under a certain budget. Different from \cite{chan2016optimizing}, our method focuses on the balance of neighborhoods with respect to a sensitive attribute for fairness rather than node degrees. \cite{ma2019attacking} rewires the input graph via reinforcement learning for adversarial attacks on graphs, whereas our work focuses on learning fair graph neural network. \cite{bi2022make} reduces heterophily by rewiring graphs in consideration of pairwise similarity of label/feature-distribution between a node and its neighbors. Compared to \cite{bi2022make}, we focuses on generating balanced neighborhood with respect to a sensitive attribute instead of reducing heterophily with respect to class labels. \cite{arnaiz2022diffwire} incorporates the Lovász bound into graph rewiring to overcome over-smoothing, over-squashing and under-reaching in graph neural networks, while our work focuses on fairness in node classification.

\noindent \textbf{Graph sampling} aims to reduce computational complexity by sampling a subset of edges in the graph. 
\cite{zeng2019graphsaint} preserves neighbors with higher influence for node representation learning with better utility, wheres our method applies sampling to generate balanced neighborhoods for fair representation learning. 
\cite{7752223} considers node degrees and selects edges based on the ranks of node degrees in an deterministic way. Different from it, our work does not consider node degree but the sensitive attribute of the neighbors and samples the neighborhood in a non-deterministic way.
\cite{NEURIPS2018_01eee509} develops an adaptive layer-wise sampling method by sharing the same neighborhood in low layers with different parent nodes in high layers. Different from it, we do not associate the same neighborhood with different nodes across layers, and the balanced neighborhood keeps the same across all layers.

\end{document}